% WACV 2025 Paper Template

\documentclass[10pt,twocolumn,letterpaper]{article}

%%%%%%%%% PAPER TYPE  - PLEASE UPDATE FOR FINAL VERSION
% \usepackage[review,algorithms]{wacv}      % To produce the REVIEW version for the algorithms track
%\usepackage[review,applications]{wacv}      % To produce the REVIEW version for the applications track
\usepackage{wacv}              % To produce the CAMERA-READY version
% \usepackage[pagenumbers]{wacv} % To force page numbers, e.g. for an arXiv version

% Include other packages here, before hyperref.
\usepackage{graphicx}

\usepackage{amsthm, amsmath, amssymb}
\usepackage{booktabs}
\usepackage{color}
\usepackage[noend]{algpseudocode}
\usepackage[linesnumbered,vlined,ruled,commentsnumbered]{algorithm2e}

\usepackage[accsupp]{axessibility}  
\usepackage{multirow} 
\usepackage{subcaption}
\usepackage{caption}
\usepackage{booktabs}
\usepackage{float}
\usepackage{mathtools}

\newtheorem{definition}{Definition}
\newtheorem{theorem}{Theorem}
\newtheorem{lemma}[theorem]{Lemma}
\newcommand{\norm}[1]{\lVert#1\rVert}

\DeclareMathOperator*{\argmin}{arg\,min}

% It is strongly recommended to use hyperref, especially for the review version.
% hyperref with option pagebackref eases the reviewers' job.
% Please disable hyperref *only* if you encounter grave issues, e.g. with the
% file validation for the camera-ready version.
%
% If you comment hyperref and then uncomment it, you should delete
% ReviewTempalte.aux before re-running LaTeX.
% (Or just hit 'q' on the first LaTeX run, let it finish, and you
%  should be clear).
\usepackage[pagebackref,breaklinks,colorlinks]{hyperref}

% Support for easy cross-referencing
\usepackage[capitalize]{cleveref}
\crefname{section}{Sec.}{Secs.}
\Crefname{section}{Section}{Sections}
\Crefname{table}{Table}{Tables}
\crefname{table}{Tab.}{Tabs.}

%%%%%%%%% PAPER ID  - PLEASE UPDATE
 % *** Enter the WACV Paper ID here

\begin{document}

%%%%%%%%% TITLE - PLEASE UPDATE
\title{Defending Against Repetitive Backdoor Attacks on Semi-supervised Learning through Lens of Rate-Distortion-Perception Trade-off}
% Repetitive No More: Backdoor Defense on Semi-supervised Learning through Lens of Rate-Distortion-Perception Trade-off

\author{
Cheng-Yi Lee$^{*}$\\
{\normalsize Academia Sinica}\\
% {\tt\small \{chris.lee, lcs\}@iis.sinica.edu.tw}
% For a paper whose authors are all at the same institution,
% omit the following lines up until the closing ``}''.
% Additional authors and addresses can be added with ``\and'',
% just like the second author.
% To save space, use either the email address or home page, not both
\and
Ching-Chia Kao$^{*}$\\
{\normalsize National Taiwan University}\\
\and
Cheng-Han Yeh\\
{\normalsize Academia Sinica}\\
\and
Chun-Shien Lu\\
{\normalsize Academia Sinica}\\
\and
Chia-Mu Yu\\
{\normalsize National Yang Ming Chiao Tung University}\\
\and
Chu-Song Chen\\
{\normalsize National Taiwan University}\\
}

\maketitle
\def\thefootnote{*}\footnotetext{Equal contribution}

%%%%%%%%% ABSTRACT
\begin{abstract}
   Semi-supervised learning (SSL) has achieved remarkable performance with a small fraction of labeled data by leveraging vast amounts of unlabeled data from the Internet. However, this large pool of untrusted data is extremely vulnerable to data poisoning, leading to potential backdoor attacks. Current backdoor defenses are not yet effective against such a vulnerability in SSL. In this study, we propose a novel method, Unlabeled Data Purification (UPure), to disrupt the association between trigger patterns and target classes by introducing perturbations in the frequency domain. By leveraging the Rate-Distortion-Perception (RDP) trade-off, we further identify the frequency band, where the perturbations are added, and justify this selection. Notably, UPure purifies poisoned unlabeled data without the need of extra clean labeled data. Extensive experiments on four benchmark datasets and five SSL algorithms demonstrate that UPure effectively reduces the attack success rate from $99.78\%$ to $0\%$ while maintaining model accuracy. Code is available here: \url{https://github.com/chengyi-chris/UPure}.

   %CIFAR10, SVHN, STL10 and CIFAR100 
\end{abstract}

\section{Introduction}
\label{sec:intro}
Deep learning models have shown superiority in computer vision tasks. However, supervised learning relies heavily on human-labeled training data. The process of labeling, which is costly~\cite{culotta2005reducing} and error-prone~\cite{li2017learning}, renders traditional machine learning methods impractically expensive for real-world applications.

Semi-supervised learning (SSL)~\cite{berthelot2019remixmatch,sohn2020fixmatch,zhang2021flexmatch} reduces the need for labeled data by leveraging a large amount of unlabeled data. By combining limited labeled data and extensive unlabeled data, SSL exhibits performance that is close to or better than that achieved by supervised learning. For example, due to its effective use of unlabeled data, the SSL algorithm, FixMatch~\cite{sohn2020fixmatch}, achieves $92\%$ classification accuracy on CIFAR10 with only \textit{100} labeled data and \textit{50k} unlabeled data. In general, SSL algorithms involve two loss functions: a supervised loss function on labeled training data (\eg, cross-entropy~\cite{murphy2012machine}) and an unsupervised loss function on unlabeled training data (\eg, cross-entropy over pseudo-labels~\cite{lee2013pseudo}). The design of the unsupervised loss contributes to the difference between SSL algorithms.

% Semi-supervised (Fig. or observation Fig)
\subsection{Unlabeled data poisoning}
Several studies~\cite{carlini2023poisoning,saha2022backdoor,yan2021deep} have explored the potential threat posed by uncurated data in collection pipelines. Backdoor attacks~\cite{gu2019badnets,li2022backdoor}, one of the vulnerabilities, allow an adversary to manipulate a fraction of the training data by injecting the trigger (\ie, a particular pattern). This causes malicious behavior in the model during training with poisoned data. The backdoor model behaves normally with benign samples, but generates malicious and targeted predictions when the backdoor is activated. However, previous work \cite{feng2022unlabeled,feng2023unlabeled,yan2021dehib,yan2021deep} assumes that an adversary has access to SSL's labeled training data, which may not be appropriate for a realistic threat model. Without knowledge of labeled data in SSL, a recent study~\cite{shejwalkar2023perils} injects poisoned samples with a low poisoning rate (\ie, $0.2\%$) to achieve $97\%$ success rate on only unlabeled data. Therefore, the goal of our study is to defend against data poisoning attacks on unlabeled data in the context of SSL. 

\subsection{Our Backdoor Defense on SSL} We investigate backdoor defense for SSL in real-world applications, where users may acquire a substantial amount of unlabeled data from untrusted third-party sources, along with a fraction of their own labeled data. Specifically, we present a simple yet effective backdoor defense, \textbf{Unlabeled data Purification (UPure)}, to cleanse the untrusted unlabeled data before training a model. We first transform the unlabeled training images into the frequency domain. UPure does not assume the knowledge about the model structure~\cite{liu2018trojaning}, training data~\cite{biggio2012poisoning,munoz2017towards,turner2019label}, and learning algorithms, nor does it rely on backdoor detectors~\cite{chen2018detecting,chou2020sentinet,gao2019strip,hayase2021spectre,liu2023detecting,qi2023towards,tran2018spectral,zeng2021rethinking} to identify poisoned samples before training. Inspired by the Rate-Distortion-Perception (RDP) trade-offs, we introduce three different strategies to purify the high-frequency components of unlabeled data. Finally, the modified images are transformed back into the pixel domain. Through this design, we not only preserve the fidelity of images but also effectively mitigate backdoor attacks. Furthermore, based on the recently developed RDP trade-offs~\cite{blau2019rethinking}, we provide a theoretical analysis to explain the rationale behind the reason for invalidating triggers and choosing perturbation areas.

We systematically conduct experiments on four benchmark image datasets with five commonly used SSL algorithms and compare them to five existing defenses that are agnostic to learning algorithms. The results show that UPure improves the robustness of SSL and outperforms the existing defenses,  with only a slight decline in benign accuracy. 

The main contributions in this paper are three-fold. 
\begin{enumerate}
    \item Under a realistic scenario, where an adversary cannot access labeled training data of SSL, we present a simple yet effective backdoor defense, UPure, against unlabeled data poisoning attacks on SSL. UPure purifies poisoned unlabeled data before training, without needing extra clean labeled data.
    \item Based on the RDP trade-off, we reveal from analysis the importance of selecting the appropriate perturbation region in the frequency domain. In addition, we also provide a theoretical justification to explain why triggers are ineffective after applying UPure.
 
    %\item \textcolor{blue}{Analytical evidence from the RDP trade-off demonstrates the critical importance of selecting the appropriate perturbation region in the spectrum. Additionally, we theoretically justify why UPure makes triggers ineffective.}
    %    \item To destroy repetitive trigger patterns in SSL, we derive a lower bound on the defense success rate in a combinatorial aspect. Besides, previous works \cite{yan2021dehib,yan2021deep} do not explain the reason why a single trigger is ineffective analytically. We also provide the corresponding theoretical justification.% in Appendix~\cref{app:C.2-rdp-tf}.}
    
    \item In terms of both Benign Accuracy (BA) and Attack Success Rate (ASR), UPure achieves superior performance against backdoor attacks using five famous SSL algorithms.

% We conduct extensive experiments against backdoor attack on five state-of-the-art SSL algorithms to show the superiority of our method in terms of both Benign Accuracy (BA) and Attack Success Rate (ASR).
\end{enumerate}

%------------------------------------------------------------------------
\section{Related Work}
\label{sec:related}
\subsection{Semi-Supervised Learning}

SSL reduces the dependence of the model on labeled data by incorporating both labeled dataset $D_{\ell}$ and unlabeled dataset $D_u$. Despite different distributions between $D_{\ell}$ and $D_u$, the number $|D_u|$ of unlabeled data is significantly larger than the number $|D_{\ell}|$ of labeled data (\textit{i.e.}, $|D_u|\gg|D_{\ell}|$). In general, SSL loss function can be formulated as a combination of a supervised loss $\mathcal{L}_{\ell}$ on $D_{\ell}$ and an unsupervised loss $\mathcal{L}_u$ on $D_u$; {\em i.e.}, $\mathcal{L}_{ss\ell} = \mathcal{L}_{\ell} + \lambda\mathcal{L}_u$. Typically, $\mathcal{L}_{\ell}$ refers to the standard cross-entropy loss, favored for its high performance. However, the choice of $\mathcal{L}_u$ differs among various SSL algorithms.

In recent years, SSL~\cite{berthelot2019mixmatch,sohn2020fixmatch,xie2020unsupervised} has made significant progress, particularly with the introduction of MixMatch~\cite{berthelot2019mixmatch}. MixMatch combines diverse data augmentation methods with existing SSL algorithms, resulting in notable performance enhancements. This study concentrates on SSL algorithms that employ pseudo-labeling~\cite{lee2013pseudo} and consistency regularization~\cite{laine2016temporal,sajjadi2016regularization} techniques. On the one hand, pseudo-labeling assigns labels to unlabeled samples based on their highest class probabilities. On the other hand, consistency regularization aims to enforce consistency in predictions made by the model on unlabeled data under different perturbations or augmentations. The idea is to encourage the model to learn more robust and generalizable representations of the unlabeled samples, avoiding producing inconsistent predictions for the same unlabeled sample. Please refer to the supplementary about the details of SSL algorithms considered in this work.

\subsection{Backdoor Attack}
Backdoor attacks~\cite{gu2019badnets,saha2020hidden,li2022backdoor} establish a malicious relationship between the trigger and target label during training and activate backdoors at inference. Backdoor attacks garner significant attention, especially within the realm of SSL~\cite{carlini2021poisoning,feng2022unlabeled,shejwalkar2023perils,yan2021dehib,yan2021deep}. However, \cite{shejwalkar2023perils} highlighted that previous approaches \cite{carlini2021poisoning,feng2022unlabeled,yan2021dehib,yan2021deep} are impractical for realistic scenarios, where an adversary can only manipulate unlabeled data without access to labeled training data of SSL. For instance, without knowledge of labeled data, \cite{carlini2021poisoning,yan2021dehib} achieved attack success rates of only up to 33\% and 37.5\%, respectively. The backdoor triggers in~\cite{feng2022unlabeled,yan2021deep} require the semantic information on target labeled image distribution to train a SSL model. In addition, \cite{wang2023manifold} postulates that the adversary can use the labeled training data in SSL, which does not align with realistic scenarios.

% 解釋為甚麼不能access label data (我覺得可以分成看的到跟看不到的trigger做說明)

To address the above issues, \cite{shejwalkar2023perils} emphasizes that backdoor attacks in SSL should be agnostic to the distribution of training-labeled images. It is imperative to select poisoning data from the targeted class and ensure that backdoor trigger patterns\footnote{Repetitive patterns cover the entire image; when zooming in any part, they exhibit similar patterns, as outlined in~\cite{shejwalkar2023perils}.} repetitively cover the entire poisoned sample to prevent compromise by strong data augmentations, such as cutouts~\cite{devries2017improved}\footnote{Cutout involves selectively removing a specific portion or object from an image as a data augmentation technique.}, commonly used in modern SSL algorithms. In this study, our primary objective is to propose a defensive approach against the backdoor attacks outlined in~\cite{shejwalkar2023perils} on SSL.

\subsection{Backdoor Defense}

Prevailing backdoor defense methods fall into two categories: (1) \textit{Pre-processing defense} \cite{chen2022effective,huang2021backdoor,li2021anti,zhang2023backdoor} train a clean model based on their defense principle from a given poisoned dataset. (2)\textit{Post-processing defense} \cite{li2020neural,liu2018fine,zeng2021adversarial,qiu2021deepsweep} attempts to purify a given model with a small fraction of benign data, which may be unrealistic in the real world. In addition, many methods either incur computational costs for identifying poisoned samples during training or require (re)training from scratch to purify a backdoored model, often rendering them time-consuming. In contrast, our approach relies solely on rapid and straightforward data manipulation, serving as a pre-processing defense that enhances robustness during training, and effectively mitigates backdoors to ensure practicality and convenience for users.

\subsection{Rate-Distortion-Perception Trade-off}
Shannon's groundbreaking research on rate-distortion theory \cite{shannon1959coding} explores the essential balance between the rate required to represent data and the distortion that arises when the data is reconstructed. Recent studies~\cite{blau2018perception,blau2019rethinking,zhang2021universal} have integrated perceptual quality into the rate-distortion theory~\cite{shannon1959coding}, illustrating a three-way trade-off among rate, distortion, and perception.  Our study aims to disrupt the association between trigger patterns and their related target classes by selective perturbations while minimizing distortion and maintaining perceptual quality. To quantify the effect of perturbations on image quality and further explicate our approach, we delve into the rationale behind our method through the lens of RDP trade-offs, as elaborated in~\cref{Sec: RDP}. 

%------------------------------------------------------------------------
\section{Our Proposed Backdoor Defense}

\subsection{Problem Formulation}

\noindent \textbf{SSL.} In general, SSL requires very little labeled data $D_{\ell} = \{(\mathbf{x}_i, y_i)\}^n_{i=1}$ and a huge amount of unlabeled data $D_{u} = \{(\mathbf{x}_i)\}^m_{i=n+1}$ to form the training dataset $D = D_{\ell} \cup D_{u}$, where $|D_{\ell}| \ll |D_{u}|$. Let $\mathbf{x}_i$ be an image sample from a distribution $\mathcal{X} \subset \mathbb{R}^{H \times W \times C}$ (\ie, a dataset also refers to an image manifold in~\cref{fig1-a}) and let its ground truth label be denoted as $y_i \in \mathcal{Y} = \{1, \dots, \mathcal{K}\}$, where $\mathcal{K}$ denotes the number of classes in the dataset.
With a customized SSL algorithm, the classifier $f$ outputs the class with the highest confidence as the prediction. %$y_{pred}$. 

\noindent \textbf{Threat model.}
Consistent with previous backdoor attacks \cite{shejwalkar2022back,shejwalkar2023perils}, the adversary is assumed only to manipulate unlabeled data of the SSL pipeline, but does not access model structure, training loss, and the labeled data. As outlined in~\cite{shejwalkar2023perils}, the repetitive trigger pattern in SSL, which does not require labeled data, is practical in real-world scenarios. In this study, we consider the repetitive trigger for our backdoor defense.

\noindent \textbf{Defender's Goal.} A common setting for the backdoor defenses in SSL \cite{li2020neural,liu2018fine,zeng2021adversarial,qiu2021deepsweep} is that a defender can have full control over the training process. However, since the defender faces a massive amount of unlabeled data, this indicates the potential presence of poisoned samples in the dataset. The defender's goal is to purify the unlabeled training data such that the model trained on the purified data is not infected by the backdoor. We do not assume that the defender has additional clean datasets, which would be more practical for real-world scenarios.
% phenomena

\subsection{Unlabeled data Purification: UPure}\label{sec:3.2} 

\begin{figure}[!htbp]
  \centering
  \begin{subfigure}[b]{0.48\linewidth}
    \centering
    \includegraphics[width=\linewidth]{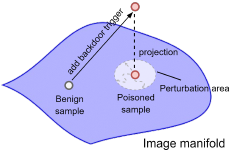}
    \caption{Conceptualization of backdoor attack}
    \label{fig1-a}
  \end{subfigure}
  \hspace{0.02\linewidth}
  \begin{subfigure}[b]{0.48\linewidth}
    \centering
    \includegraphics[width=\linewidth]{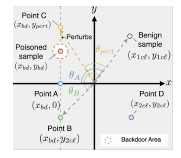}
    \caption{Local Mapping to 2D Euclidean space}
    \label{fig1-b}
  \end{subfigure}

  \caption{Illustration of UPure: (a) A benign sample can be turned into a malicious one by adding a backdoor trigger that moves it outside the image manifold and then projects onto the image manifold by clipping it to become a poisoned sample. (b) A local mapping that maps the benign and poisoned samples to 2D space. Points A, B, and C represent three strategies in the frequency purification step in UPure (\cref{sec:3.2}), which renders the poisoned sample ineffective.} 
  \label{fig:perturb_idea}

\end{figure}
% The backdoor area is defined as $\Omega = \{\omega : f(\omega) = t\}$, where $t$ is the target class.

As depicted in \cref{fig1-a}, poisoned samples exhibit distinct representations on the image manifold from the benign ones. One can see that even if we identify both the exact backdoor trigger and sample, we cannot reconstruct the benign image due to the projection operator (\ie, clipping, which is a multi-to-one function). Our goal is to steer poisoned samples away from the backdoor's effective zone by inducing a random walk with some directional and perturbation constraints. We consider a frequency space illustration (shown in~\cref{fig1-b}), where $x$-axis represents low-frequency components and $y$-axis represents high-frequency components. Here, an image is composed of a low-frequency and a high-frequency components. The subscript $c\ell$ denotes ``clean,'' $bd$ denotes ``backdoored,'' and $pert$ denotes ``perturbation''. We can see from such a space that if only the $y$-axis is changed, the angle $\theta_{pert}$ between the perturbed/purified poisoned sample (\ie, point C) and its benign counterpart will be smaller than $\theta_A$ (between point A and benign sample) and $\theta_B$ (between point B and benign sample), where $\theta_A$ is obtained by setting $y_{bd}$ to 0 and $\theta_B$ is obtained by replacing $y_{bd}$ with $y_{2c\ell}$ of another benign training sample. These angles (\ie, $\theta_A$, $\theta_B$, and $\theta_{pert}$) indicate the quality of the resulted images, compared to the benign sample; the smaller the angle, the better the fidelity. The aforementioned low-frequency and high-frequency components are also illustrated as the blue and red areas when it comes to the spectrum of an image, respectively, in~\cref{fig:proposed}.

% The visual results for these three strategies can be found in Appendix~\cref{app:B}.

\begin{figure}[!ht]
  \centering
  \centerline{\includegraphics[width=0.9\linewidth]{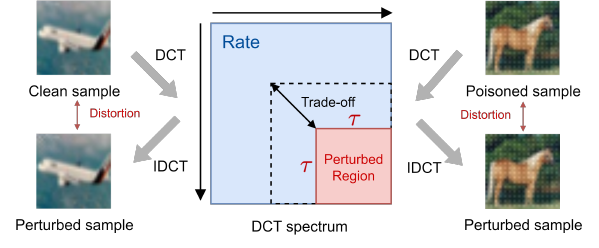}}
  \caption{Connection between RDP and our backdoor defense method on SSL. Specifically, UPure perturbs the high-frequency component in DCT spectrum to preserve the perceptual quality. By the RDP trade-off, we can derive the size of the perturbation zone (\ie distortion) to eliminate the effect of repetitive backdoor.}

\label{fig:proposed}
\end{figure}

Based on the above observations, we propose that UPure purify the training images before the training process. UPure consists of three steps: transformation to the frequency domain, frequency purification, and transformation to the pixel domain. The first and third steps are accomplished by discrete cosine transform (DCT) \cite{DCT} and Inverse DCT (IDCT). For the second step, we study three strategies, including ``\textit{Turn to zero},'' ``\textit{Replace from others},'' and ``\textit{Add perturbation},'' (points A, B, and C in~\cref{fig:perturb_idea}). Different strategies purify the high-frequency spectrum in different ways, as depicted in~\cref{fig:proposed}. Given the constraints of distortion and perception quality, this allows us to preserve an appropriate area (\ie, rate) in the DCT spectrum. However, perturbing low-frequency components in the DCT spectrum removes semantic features, prompting us to select high-frequency components as perturbation regions. Compared to diffusion-based backdoor defense~\cite{shi2023black}, UPure serves as a form of data augmentation and offers a more efficient approach for processing large amounts of unlabeled data.

% we do not specifically design the loss function.

Nevertheless, an important yet unexplored problem remains: \textbf{How to characterize an appropriate perturbation region (\ie, the bottom-right red region of size $\tau\times\tau$ in~\cref{fig:proposed})? Specifically, what is the theoretical maximal size of this region?} To answer this question, the RDP trade-offs \cite{blau2019rethinking,zhang2021universal} are examined in the next section.

\subsection{Theoretical Analysis}\label{Sec: RDP} % (3/4 to 1 pages)

%The rate-distortion-perception (RDP) function \cite{blau2019rethinking} is used here to justify the design behind the UPure. RDP has two roles, encoder and decoder, which define a surface that can serve as an optimal threshold for further derivation. We can derive the defense success rate based on this optimal threshold. 

Our theoretical analysis can be divided into two stages. First, we introduce the RDP trade-offs~\cite{blau2019rethinking} and demonstrate that UPure fits within the RDP paradigm, where there exists a theoretical lower bound of rate as an optimal threshold. Second, we derive the trigger failure probability under a single or repetitive trigger pattern, which means a lower bound in defense based on this optimal threshold.

%to destruct the repetitive patterns. (\ie, 1--ASR)

% derive an optimal threshold for further destruct the repetitive patterns~\cite{shejwalkar2023perils} in Theorem~\cref{thm:pf}.

\subsubsection{Rate-Distortion-Perception trade-offs} 
The RDP function \cite{blau2019rethinking} aims to study the relation between the input $X$ and the output $\hat{X}$ of an encoder-decoder pair, and is a mapping defined by a conditional distribution $p_{\hat{X}|X}$.

\begin{definition}
The Rate-Distortion-Perception function \cite{blau2019rethinking} is defined as
\vspace{-0.1in}
\begin{align}\label{eq:RDP}
R(D,P) = \min_{p_{\hat{X}|X}} \,\, I(X,\hat{X}) 
 \nonumber \\
\,\, \text{\emph{s.t.}} \,\, \mathbb{E}[\Delta(X,\hat{X})] \le D,\,\, d(p_X,p_{\hat{X}}) \le P,
\end{align}
where $I$ denotes mutual information \cite{cover2012elements}, $\mathbb{E}[\Delta(X,\hat{X})]$ denotes the expected distortion of decoded signals w.r.t the joint distribution $p_{X,\hat{X}}=p_{\hat{X}|X}p_X$, $\Delta:\mathcal{X} \times \hat{\mathcal{X}} \rightarrow \mathbb{R}^+$ is any distortion measure such that $\Delta(x,\hat{x})=0$ if and only if $x=\hat{x}$, and $d(\cdot,\cdot)$ is some divergence between distributions. Here, we assume that $d(p,q)\ge0$, and $d(p,q)=0$ if and only if $p=q$.
\end{definition}

%In \cite{zhang2021universal}, under Gaussian source, the RDP function in Eq.~(\cref{eq:RDP}) can be solved in a close form (defer to details in Appendix~\cref{app:C.2-rdp-tf}). We adopt this lower bound which is the rate, denoted as $M$, for further derivation in Lemma~\cref{lem:2} and Theorem~\cref{thm:pf} below. 

%Due to the universality of RDP, though it employs squared error as its distortion measure, common metrics (\textit{e.g.}, PSNR) can be used in our image-based experiments in Section~\cref{expts}. 
The RDP function works for an encoder-decoder architecture; in our case, we consider DCT as the encoder and IDCT as the decoder. In the encoder-decoder architecture, a quantizer converts a continuous range of values into discrete values, which dominates the fidelity of the decoded images. Let $\mathcal{Q}_C$, $x$, and $g(x)$ be the set of quantization centers,  the input, and the output of the encoder before quantization, respectively. In the following, we study deterministic quantization (DQ) and noisy quantization (NQ).

\noindent \textbf{Deterministic Quantization (DQ).} The defender computes
\begin{align}\label{eq:dq}
    z = \argmin_{c \in \mathcal{Q}_C} \norm{g(x) - c}
\end{align}
and delivers $z$ to the user for decoding. %The receiver decodes the image by passing $z$ through the decoder. 
% This is the simplest method of quantization, but it lacks the stochasticity needed to train an effective model. 
Specifically, $\mathcal{Q}_C$ in \cref{eq:dq} are the spectrums with zero high frequency component for ``\textit{Turn to zero}'' (points A in~\cref{fig1-b}). For 
``\textit{Replace from other}'', let $\Omega = \{\omega : \text{spectrums of benign samples}\}$. The user first computes $z' = z \oplus \omega$, which means replacing the high-frequency component of $z$ by $\omega$ and then passing $z'$ to the decoder to get the desired output (points B in~\cref{fig1-b}). 
% and spectrums used to replace the high frequency component with other benign samples for ``\textit{Replace from other}'' (points A and B in Fig.~\cref{fig1-b}, respectively). 
On the other hand, NQ has only one type of quantizer, corresponding to ``\textit{Add perturbation}'' in UPure.
%, which performs better than the other two in Section~\cref{expts}.
% \cite{zhang2021universal}

\noindent \textbf{Noisy Quantization (NQ).} The defender also computes~\cref{eq:dq}
% \begin{align}\label{eq:nq}
%     z = \argmin_{ \eta > \mathbb{B}_\epsilon(g(x))} \,\norm{g(x) - \eta}
% \end{align}
and sends $z$ to the user for decoding. 
%Then, the receiver decodes the image by passing $z$ through the decoder. 
Given $\epsilon > 0$, the user computes $z'' = z \oplus \eta$, where $\eta \sim \mathcal{N}(0,\sigma^2 I)$ represents a perturbation with $\|\eta\| > \epsilon$, serving as the constraint to force the solution $z$ out of the backdoor region (points C in~\cref{fig1-b}). 

Theoretically, for a fixed distortion $D$ resulting from DQ or NQ, we simulate several $R(D,P)$ curves with different perceptual quality $P$ in the supplementary. We claim that if we desire to obtain a lower $P$ (better quality), we have to devise a better quantizer. To validate our claim, we empirically evaluate two different quantizers, namely, ``\textit{Turn to zero}'' and ``\textit{Replace from other},'' from DQ and ``\textit{Add perturbation}'' from NQ in ~\cref{expts}.

\subsubsection{Destroy Trigger Patterns} 
% We theoretically analyze how to destroy trigger patterns and derive a lower bound on the defense success probability. Specifically, Lemma~\cref{lem:2} discusses how UPure deals with repetitive triggers by perturbing high-frequency components to affect the defense success probability. Furthermore, we provide a theoretical explanation for the ineffectiveness of \cite{yan2021dehib} and \cite{yan2021deep} after applying cutout operation

Consider two cases of backdoor attacks using single~\cite{yan2021dehib,yan2021deep} or repetitive~\cite{shejwalkar2023perils} triggers in SSL. The first situation is the probability that a single trigger cannot succeed under the cutout~\cite{devries2017improved} operation, as listed in Lemma~\ref{lem:1}. The second situation discusses the probability of repeated trigger failure through UPure (by perturbing on high-frequency components), as shown in Lemma~\ref{lem:2}. The detailed proofs are provided in the supplementary. In summary, we have the following~\cref{thm:pf} regarding the lower bound of the failure probability between these triggers.

% , while Lemma~\cref{lem:2} discusses how UPure deals with repetitive triggers by perturbing high-frequency components to affect the defense success rate.

%We also assume that if the intersection area of the trigger and cutout region exceeds $\alpha$, the trigger will fail.

% We notice the observation in \cite{shejwalkar2023perils} that cutout will affect the ASR of the poisoned data. Hence, we here formulate this problem below.

\begin{lemma}\label{lem:1}
    Consider an image of size $\mathsf{H} \times \mathsf{W}$ with a single trigger pattern bounded by a rectangle $\mathsf{H}_t \times \mathsf{W}_t$ and the cutout~\cite{devries2017improved} region is $\mathsf{H}_c \times \mathsf{W}_c$, where $\mathsf{H} > \mathsf{H}_c \ge \mathsf{H}_t$ and $\mathsf{W} > \mathsf{W}_c \ge \mathsf{W}_t$. 
    % We assume that if at least $\alpha$ pixels in the trigger has been removed, where $1 \leq \alpha \leq H_t \times W_t$, a single trigger will fail. 
    Let $h$ and $w$ be the cover distance between the cutout region and the trigger along the horizontal (vertical) direction. We have the failure probability $p_f^{single}$ of a single trigger pattern
    \begin{align}
        p_f^{single} \ge \frac{\sum_{w=0}^{\left\lfloor \mathsf{W}_t - \frac{\alpha}{\mathsf{H}_t} \right\rfloor}\left[\left\lfloor \Phi(w) \right\rfloor + 1\right]}{(\mathsf{H}-\mathsf{H}_c+1)(\mathsf{W}-\mathsf{W}_c+1)},
    \end{align}
where $\Phi(w) = \mathsf{H}_t - \frac{\alpha}{(\mathsf{W}_t - w)} = h$ and $1 \leq \alpha \leq \mathsf{H}_t \times \mathsf{W}_t$. 
\end{lemma}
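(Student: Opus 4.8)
The plan is to set up a uniform probability model over cutout placements and then reduce the failure event to a purely geometric counting problem whose answer is lower-bounded by the stated sum. First I would fix the trigger at a position sufficiently interior to the image and let the cutout of fixed size $\mathsf{H}_c \times \mathsf{W}_c$ be applied uniformly at random among all admissible placements. Since the cutout must lie inside the $\mathsf{H} \times \mathsf{W}$ image, its top-left corner can occupy exactly $(\mathsf{H} - \mathsf{H}_c + 1)$ vertical and $(\mathsf{W} - \mathsf{W}_c + 1)$ horizontal positions, which produces the denominator $(\mathsf{H} - \mathsf{H}_c + 1)(\mathsf{W} - \mathsf{W}_c + 1)$ as the total number of equally likely placements. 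I would then declare the trigger to \emph{fail} precisely when the cutout erases at least area $\alpha$ of it, i.e.\ when the covered area of the trigger is at least the threshold $\alpha$, where $1 \le \alpha \le \mathsf{H}_t\mathsf{W}_t$.

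Next I would parametrize each placement by the residual (uncovered) horizontal and vertical extents $w$ and $h$ of the trigger that remain exposed, so that, using $\mathsf{W}_c \ge \mathsf{W}_t$ and $\mathsf{H}_c \ge \mathsf{H}_t$ (the cutout can span the trigger in either direction up to a residual gap), the covered area equals $(\mathsf{W}_t - w)(\mathsf{H}_t - h)$. The failure condition $(\mathsf{W}_t - w)(\mathsf{H}_t - h) \ge \alpha$ then rearranges, since $\mathsf{W}_t - w > 0$, into $h \le \mathsf{H}_t - \frac{\alpha}{\mathsf{W}_t - w} = \Phi(w)$. For integer $h \ge 0$ this admits exactly the $\lfloor \Phi(w)\rfloor + 1$ values $h \in \{0,\dots,\lfloor \Phi(w)\rfloor\}$, while the constraint $\Phi(w) \ge 0$ forces $\mathsf{W}_t - w \ge \frac{\alpha}{\mathsf{H}_t}$, i.e.\ $w \in \{0,\dots,\lfloor \mathsf{W}_t - \alpha/\mathsf{H}_t\rfloor\}$. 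Summing over $w$ reproduces the numerator $\sum_{w=0}^{\lfloor \mathsf{W}_t - \alpha/\mathsf{H}_t\rfloor}(\lfloor \Phi(w)\rfloor + 1)$ as the count of distinct failing overlap configurations $(w,h)$.

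The final step is to translate the count of failing configurations into a bound on the count of failing placements. For each admissible pair $(w,h)$ I would exhibit at least one valid cutout placement that realizes exactly that uncovered extent (for instance, the placement leaving the exposed strip in the top-left corner of the trigger) and observe that distinct pairs yield distinct placements, so the assignment is injective. Consequently the number of failing placements is at least the number of failing configurations, and dividing both by the total number of placements gives the claimed inequality for $p_f^{single}$.

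\textbf{The main obstacle} is precisely this last correspondence, which is why the result is an inequality rather than an equality. A single uncovered configuration — especially the full-cover cases $w=0$ or $h=0$ — can be realized by many placements once $\mathsf{W}_c > \mathsf{W}_t$ or $\mathsf{H}_c > \mathsf{H}_t$, and placements approaching the trigger from the other three corners are never counted; both effects mean the true failing-placement count strictly exceeds the configuration count. The delicate part is therefore to argue that every counted configuration remains realizable under the interior-placement assumption, so that no admissible $(w,h)$ is lost to boundary clipping of the image; this is what certifies the displayed expression as a genuine lower bound on the failure probability.
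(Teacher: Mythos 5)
Your proposal is correct and its combinatorial core is identical to the paper's: both reduce failure to the lattice-point count under the hyperbola $(\mathsf{W}_t - w)(\mathsf{H}_t - h) = \alpha$, yielding the numerator $\sum_{w=0}^{\lfloor \mathsf{W}_t - \alpha/\mathsf{H}_t\rfloor}\left[\lfloor \Phi(w)\rfloor + 1\right]$, over the same denominator of admissible cutout placements. Where you diverge is the step that turns this count into a lower bound. The paper anchors the trigger at the bottom-left \emph{corner} of the image and asserts (without detailed proof) that the corner position minimizes $p_f^{single}$, since image boundaries restrict which cutout placements can intersect the trigger; there each configuration $(w,h)$ is read off directly as the cutout's corner coordinate, and the bound follows for arbitrary trigger positions from this worst-case observation. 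You instead fix an \emph{interior} trigger and build an injection from overlap configurations to failing placements, which buys you two things: you avoid relying on the paper's unproven corner-minimality claim, and you make the sources of slack explicit (several placements realizing one configuration, and approaches from the other three corners never counted). The cost is that your ``sufficiently interior'' assumption formally leaves boundary-adjacent triggers uncovered --- exactly the case the paper's corner argument is designed to handle. This residual gap closes under the paper's implicit size condition (it remarks in Sec.~5 that the image must be at least twice the cutout region, which gives $\mathsf{W} \ge \mathsf{W}_c + \mathsf{W}_t$ and $\mathsf{H} \ge \mathsf{H}_c + \mathsf{H}_t$): with that margin, your canonical placement for each $(w,h)$ stays inside the image even when the trigger sits at a corner, so your injection argument in fact extends to all trigger positions and arguably yields a cleaner proof than the paper's.
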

%and the intersection of cutout region and the trigger is larger than $\alpha$
Following Lemma~\ref{lem:1}, a single trigger pattern will be ineffective if the overlapping area between the trigger and the cutout region surpasses a certain size $\alpha$. Furthermore, we consider the repetitive trigger pattern with high-frequency characteristics and its failure probability bounds as follows.

\begin{lemma}\label{lem:2}
    Consider a $\mathsf{N}$-pixel image, which has $\mathsf{N}$ coefficients in the DCT-based frequency domain. The $k$-th coefficient is changed with probability $q_k$, where $\mathsf{M}+1\leq k\leq \mathsf{N}-1$ with $\mathsf{M}$ as a threshold and $k$ follows the zig-zag order. When $\beta$ coefficients are changed, we have the failure probability $p_f^{repet}$ of a repetitive trigger pattern
    \begin{align}\label{eq:lem2}
        p_f^{repet} \ge \sum_{\mathsf{L}=\beta}^{\mathsf{N}-\mathsf{M}-1} \binom{\mathsf{N}-\mathsf{M}-1}{\mathsf{L}} \prod_{k \in \mathsf{S}}q_k\prod_{k \in \mathsf{S}^\mathsf{C}}(1 - q_k),
    \end{align}
    where $\mathsf{S}$ is a set of frequency indexes in which the corresponding coefficients are changed, and $|\mathsf{S}| = \mathsf{L}$ and $|\mathsf{S}^\mathsf{C}| = (\mathsf{N}-\mathsf{M}-1)-\mathsf{L}$ that are index sets chosen from the universal index set $\mathsf{U}$\footnote{The universal index set $\mathsf{U} = \{\mathsf{M}+1, \mathsf{M}+2, \cdots, \mathsf{N}-1\} = \mathsf{S} \cup \mathsf{S}^\mathsf{C}$, indicating that the rate to be preserved is $\mathsf{M}$ and the other coefficients can be discarded.}.
    %Given a set $\mathcal{S}$, the $|\cdot|$ operation measures the cardinality of $\mathcal{S}$.
\end{lemma}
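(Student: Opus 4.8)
The plan is to treat the high-frequency purification step of UPure as a collection of independent Bernoulli trials, one per DCT coefficient lying beyond the preserved rate, and to identify trigger failure with the event that sufficiently many of these coefficients are perturbed. Concretely, I would fix the universal index set $\mathsf{U}=\{\mathsf{M}+1,\dots,\mathsf{N}-1\}$ of the $\mathsf{N}-\mathsf{M}-1$ high-frequency coefficients beyond the preserved rate $\mathsf{M}$, and model the $k$-th coefficient as being altered independently with probability $q_k$. Because a repetitive trigger is carried almost entirely by its high-frequency content (as argued when the method is introduced), the trigger is destroyed once at least $\beta$ of these coefficients are disturbed; this is the failure event whose probability we lower-bound.

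First I would write down the probability of a single prescribed outcome of the trials. For a fixed subset $\mathsf{S}\subseteq\mathsf{U}$ of coefficients that are changed, independence gives the joint probability $\prod_{k\in\mathsf{S}}q_k\prod_{k\in\mathsf{S}^\mathsf{C}}(1-q_k)$, where $\mathsf{S}^\mathsf{C}=\mathsf{U}\setminus\mathsf{S}$ collects the unchanged indices, so that $|\mathsf{S}|+|\mathsf{S}^\mathsf{C}|=\mathsf{N}-\mathsf{M}-1$. Summing this product over all subsets of a fixed cardinality $|\mathsf{S}|=\mathsf{L}$ yields the probability that exactly $\mathsf{L}$ coefficients are changed, and there are $\binom{\mathsf{N}-\mathsf{M}-1}{\mathsf{L}}$ such subsets, which is precisely the combinatorial factor appearing in \cref{eq:lem2}. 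When the $q_k$ are taken equal this collapses to the usual binomial term $\binom{\mathsf{N}-\mathsf{M}-1}{\mathsf{L}}q^{\mathsf{L}}(1-q)^{\mathsf{N}-\mathsf{M}-1-\mathsf{L}}$; in general it is the corresponding Poisson-binomial term, and I would keep the representative-product notation of the statement.

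Next I would assemble the tail. Since the failure event is ``at least $\beta$ coefficients changed,'' its probability is obtained by summing the exactly-$\mathsf{L}$ probabilities over $\mathsf{L}=\beta,\beta+1,\dots,\mathsf{N}-\mathsf{M}-1$, which reproduces the right-hand side of \cref{eq:lem2}. The inequality $p_f^{repet}\ge(\cdots)$ then follows because disturbing at least $\beta$ high-frequency coefficients is sufficient, but not necessary, for the trigger to fail: the genuine failure set contains the ``$\ge\beta$ changed'' set, so the summed probability is a valid lower bound on $p_f^{repet}$.

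The main obstacle I expect is not the combinatorics but the modeling justification, namely pinning down the threshold $\beta$ and arguing that perturbing at least $\beta$ high-frequency coefficients genuinely implies failure of a repetitive trigger. This is where the high-frequency concentration of the repetitive pattern and the preserved-rate bound $\mathsf{M}$ from the RDP analysis must be invoked, so that perturbing coefficients in $\mathsf{U}$ provably erodes the trigger while leaving the preserved low-frequency content, and hence benign fidelity, intact. A secondary technical point is the heterogeneity issue: if the $q_k$ differ, the clean binomial factorization does not hold, and I would either assume a common $q_k=q$ or interpret \cref{eq:lem2} as the Poisson-binomial tail with the product taken over a representative subset, arranging that the stated $\ge$ absorbs this approximation.
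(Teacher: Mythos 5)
Your proposal is correct and follows essentially the same route as the paper's own (far terser) proof: the paper likewise treats the $\mathsf{N}-\mathsf{M}-1$ high-frequency coefficients as independently changed, counts size-$\mathsf{L}$ subsets via $\binom{\mathsf{N}-\mathsf{M}-1}{\mathsf{L}}$, and sums the tail over all counts $\mathsf{L}\ge\beta$, delegating low-frequency destruction to cutout exactly as you do. Your write-up is in fact more careful than the paper's three-line argument, notably in flagging that for heterogeneous $q_k$ the bound should properly be a Poisson-binomial sum over all size-$\mathsf{L}$ subsets rather than a binomial coefficient times one representative product, and in noting that the claim ``at least $\beta$ perturbed coefficients destroys the trigger'' is a modeling assumption the paper also leaves unjustified.
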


% M 變小 p_f^H 大, \beta 變小 p_f^H 大, q_k 大  p_f^H 大
As mentioned in~\cite{shejwalkar2023perils}, cutout operations~\cite{devries2017improved} also destroy low frequency in SSL training. Here, we focus on high frequency (a repetitive trigger pattern) in Lemma~\ref{lem:2}, which posits that the probability of failure increases with changes in certain high-frequency coefficients. The impact of variables $\mathsf{M}$, $\beta$, and $q_k$ on Lemma~\ref{lem:2} is illustrated in~\cref{sec:discussion}. In addition, $|\mathsf{S}|$ also denotes the perturbation region located in the bottom-right of DCT spectrum in~\cref{fig:proposed}.

% The visualization of the effect by $M, \beta$, and $q_k$ on Lemma~\cref{lem:2} is provided in Fig.~\cref{fig:6-lem2} in Section~\cref{sec:discussion}. 

% Specifically, decreasing in $M$ and $\beta$ results in a higher lower bound of $p_f^H$ while increasing the individual probabilities $q_k$ will also cause a higher lower bound of $p_f^H$.
%Specifically, it formulates this probability as a sum over combinations of changed coefficients, factoring in the likelihood of each combination.
 
%This approach integrates both combinatorial aspects—how many ways coefficients can change—and the individual probabilities that cause a failure.

%Thus, repetitive triggers will be ineffective if $\beta$ coefficients are modified.
\setcounter{theorem}{0}
\begin{theorem}\normalfont (The success probability of defense).\label{thm:pf}
    %Combine with cutout~\cite{devries2017improved} in defeating a single trigger and our method in defeating a repetitive trigger, 
    Suppose that cutout~\cite{devries2017improved} and UPure are applied during SSL training. In this scenario, we have a success probability $p_f$, which represents the failure probability of the trigger, to purify poisoned samples.
    \begin{align}\label{eq:pb}
    \scalebox{0.84}{$
        p_f \ge \left[\sum\limits_{\mathsf{L}=\beta}^{\mathsf{N}-\mathsf{M}-1} \binom{\mathsf{N}-\mathsf{M}-1}{\mathsf{L}} 
        \prod\limits_{k \in \mathsf{S}}q_k\prod\limits_{k \in \mathsf{S}^\mathsf{C}}(1 - q_k)\right]\left[\frac{\sum_{w=0}^{\lfloor \mathsf{W}_t - \frac{\alpha}{\mathsf{H}_t} \rfloor}\left[\lfloor \Phi(w) \rfloor + 1\right]}{(\mathsf{H}-\mathsf{H}_c+1)(\mathsf{W}-\mathsf{W}_c+1)}\right]$}
    \end{align}
\end{theorem}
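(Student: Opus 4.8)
The plan is to recognize that the right-hand side of \cref{eq:pb} is precisely the product of the two lower bounds already established in Lemma~\ref{lem:2} (the first bracketed, repetitive-trigger factor) and Lemma~\ref{lem:1} (the second bracketed, single-trigger factor). Thus the theorem reduces to justifying that, when cutout and UPure are applied together during training, the overall failure probability of the trigger factorizes as the product of the two individual failure probabilities.

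First I would define two events for a given poisoned sample: let $A$ denote the event that the cutout operation neutralizes the trigger in the spatial domain, and let $B$ denote the event that UPure neutralizes the trigger by perturbing the high-frequency DCT coefficients. By Lemma~\ref{lem:1} we have $\Pr[A] \ge p_f^{single}$, and by Lemma~\ref{lem:2} we have $\Pr[B] \ge p_f^{repet}$. The next step is to observe that cutout acts by masking a spatial rectangle in the pixel domain, governed by the uniform placement $(h,w)$ of the cutout window, whereas UPure acts by altering coefficients in the DCT frequency domain, governed by the per-coefficient flip probabilities $q_k$. Since these two sources of randomness are drawn from separate, pre-specified mechanisms operating on orthogonal representations of the same image, I would argue that the events $A$ and $B$ are independent.

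With independence in hand, the combined defense is modeled as succeeding only when both mechanisms neutralize the trigger, so $p_f = \Pr[A \cap B] = \Pr[A]\,\Pr[B]$. Combining this with the two lemma bounds and using monotonicity of the product in each nonnegative factor yields $p_f = \Pr[A]\,\Pr[B] \ge p_f^{single}\cdot p_f^{repet}$, which is exactly the right-hand side of \cref{eq:pb}. I would also note as a robustness remark that even under the weaker model in which the defense succeeds when \emph{either} mechanism fires, the identity $\Pr[A\cup B] = \Pr[A]+\Pr[B]-\Pr[A]\Pr[B]$ together with the elementary inequality $\Pr[A]\,(1-\Pr[B])+\Pr[B]\,(1-\Pr[A]) \ge 0$ still gives $p_f \ge \Pr[A]\,\Pr[B]$, so the product remains a valid lower bound irrespective of the precise combination rule.

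I expect the main obstacle to be the independence justification in the middle step, since cutout and UPure operate on the same underlying image and one must argue that conditioning on the realized cutout mask does not change the distribution of which high-frequency coefficients UPure alters. The cleanest route is to emphasize that the two randomization mechanisms are external and pre-specified (window placement versus coefficient flips), so the joint law factorizes as a product measure and independence follows. If strict independence proves difficult to defend rigorously, the fallback is the union-model inequality above, which only requires that the true joint failure probability dominates the product, thereby delivering the stated lower bound regardless of the exact dependence structure between the spatial and spectral operations.
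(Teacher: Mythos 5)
Your proposal is correct and takes essentially the same route as the paper, whose entire proof is the one-line observation that Lemma~\ref{lem:1} (cutout destroys a single trigger) and Lemma~\ref{lem:2} (UPure invalidates a repetitive trigger) can be multiplied to give $p_f \ge p_f^{single}\cdot p_f^{repet}$, which is the right-hand side of \cref{eq:pb}. Your explicit treatment of independence — the cutout placement and the DCT-coefficient perturbations are separate, pre-specified randomization sources, so the joint law factorizes — together with the union-model fallback, supplies precisely the justification the paper's proof leaves implicit.
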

\begin{proof}
    According to Lemma~\ref{lem:1}, a single trigger pattern is destroyed with cutout operation~\cite{devries2017improved}. Moreover, according to Lemma~\ref{lem:2}, a repetitive trigger pattern is invalid for UPure. Therefore, we can directly obtain a success probability of defense $p_f \ge p_f^{single} \cdot p_f^{repet}$.
\end{proof}
% Cutout operation~\cite{devries2017improved} have been proven effectively to destroy a single trigger in these lessen 
\cref{thm:pf} facilitates the calculation of a threshold value $\beta$ using the RDP function. This threshold determines the number of coefficients $|\mathsf{S}|$ that the bottom-right region of size in DCT spectrum should be perturbed. Consequently, it allows us to establish a lower bound for the success rate in defense for a given $\alpha$.

% According to Theorem~\cref{thm:pf}, we can first acquire a theoretical threshold $\beta$ from the RDP function, which determines the number of coefficients to be perturbed, and then obtain a lower bound of the defense success rate of UPure given a fixed $\alpha$.

\section{Experiments}\label{expts}
\subsection{Experimental Setup}

\noindent \textbf{Datasets and model.} We evaluate our experiments on four benchmark datasets (CIFAR10~\cite{krizhevsky2009learning}, SVHN~\cite{netzer2011reading}, STL10~\cite{coates2011analysis}, and CIFAR100~\cite{krizhevsky2009learning}). For training, we utilize WideResNet~\cite{zagoruyko2016wide} as model architecture and follow the original hyper-parameters specified in USB benchmark~\cite{wang2022usb}, as aligned with~\cite{shejwalkar2023perils}. It is worth noting that STL10~\cite{coates2011analysis} (a scaled-down variant of ImageNet1K~\cite{deng2009imagenet}) using the SSL algorithm is still cost-expensive for computation, \ie, about $48$ hours to complete $20,000$ iterations. We refer to the supplementary for more implementation details. 

% Notably, STL10, a scaled-down variant of ImageNet1K~\cite{deng2009imagenet}, is still cost-expensive to train a model with SSL algorithms.

\begin{table*}[!ht]
\begin{center}
\caption{Evaluation of various backdoor defense across different SSL algorithms on four datasets. All values are showed as percentage.}\label{tab:main_result}
\vspace{-0.1in}
%The best results are bolded while the second-best results are underlined. 
\centering
\begin{subtable}{\textwidth}
\centering
\caption{CIFAR10}
\scalebox{0.78}{
\begin{tabular}{ccccccccccccccccccccc}
\toprule
\multirow{2}{*}{Algorithm} & \multicolumn{2}{c}{No Defense} & &\multicolumn{2}{c}{FT} & &\multicolumn{2}{c}{FP~\cite{liu2018fine}} &  & \multicolumn{2}{c}{NAD~\cite{li2020neural}} &  &\multicolumn{2}{c}{I-BAU~\cite{zeng2021adversarial}} &  & \multicolumn{2}{c}{DePuD~\cite{yan2021deep}} & & \multicolumn{2}{c}{UPure} \\ 
\cmidrule{2-3} \cmidrule{5-6} \cmidrule{8-9} \cmidrule{11-12} \cmidrule{14-15} \cmidrule{17-18} \cmidrule{20-21}
& \multicolumn{1}{c}{BA} & ASR & &\multicolumn{1}{c}{BA} & ASR & &\multicolumn{1}{c}{BA} & ASR & & \multicolumn{1}{c}{BA} & ASR & & \multicolumn{1}{c}{BA} & ASR && \multicolumn{1}{c}{BA} & ASR && \multicolumn{1}{c}{BA} & ASR \\
\midrule
Mixmatch~\cite{berthelot2019mixmatch} & 92.71&98.14&&91.40&\underline{2.17}&&\underline{91.82}&96.17&&88.39&2.30&&\textbf{91.85}&96.29&&84.99&2.46&& 87.57&\textbf{0.20}\\
\midrule
Remixmatch~\cite{berthelot2019remixmatch} & 87.86&83.65&&\textbf{87.94}&\textbf{1.78}&&\underline{87.82}&88.60&&87.87&\underline{1.84}&&87.86&88.88&&66.76&98.10&&85.30&29.62\\ 
\midrule
UDA~\cite{xie2020unsupervised} & 92.76&98.13&&92.96&3.09&&\underline{93.22}&96.90&&92.97&\underline{2.61}&&\textbf{93.31}&97.97&&87.62&87.71&&90.91&\textbf{0.00}\\
\midrule
Fixmatch~\cite{sohn2020fixmatch} &92.80&97.54&&92.36&3.62&&\underline{92.71}&95.00&&92.33&\underline{3.28}&&\textbf{92.96}&97.50&&85.24&94.58&&91.05&\textbf{0.00}\\
\midrule
Flexmatch~\cite{zhang2021flexmatch} & 94.42&97.16&&93.78&\underline{2.11}&&\underline{94.30}&96.94&&93.90&2.80&&\textbf{94.42}&97.17&&87.42&86.33&&93.66&\textbf{0.00}\\
\bottomrule
   \end{tabular}}
\end{subtable}

\begin{subtable}{\textwidth}
\centering
\vspace*{.3em}
\caption{SVHN}
\scalebox{0.78}{
\begin{tabular}{ccccccccccccccccccccc}
\toprule
\multirow{2}{*}{Algorithm} & \multicolumn{2}{c}{No Defense} & &\multicolumn{2}{c}{FT} & &\multicolumn{2}{c}{FP~\cite{liu2018fine}} &  & \multicolumn{2}{c}{NAD~\cite{li2020neural}} &  &\multicolumn{2}{c}{I-BAU~\cite{zeng2021adversarial}} &  & \multicolumn{2}{c}{DePuD~\cite{yan2021deep}} & & \multicolumn{2}{c}{UPure} \\ 
\cmidrule{2-3} \cmidrule{5-6} \cmidrule{8-9} \cmidrule{11-12} \cmidrule{14-15} \cmidrule{17-18} \cmidrule{20-21}
& \multicolumn{1}{c}{BA} & ASR & &\multicolumn{1}{c}{BA} & ASR & &\multicolumn{1}{c}{BA} & ASR & & \multicolumn{1}{c}{BA} & ASR & & \multicolumn{1}{c}{BA} & ASR && \multicolumn{1}{c}{BA} & ASR && \multicolumn{1}{c}{BA} & ASR \\
\midrule
Mixmatch~\cite{berthelot2019mixmatch} & 95.91 & 88.74 &&95.12&\textbf{0.41}&& \underline{95.55}&94.89&&90.22&\underline{1.05}&&\textbf{95.56}&94.93&&64.17&3.47&&92.28&1.09\\
\midrule
Remixmatch~\cite{berthelot2019remixmatch} & 92.19 &49.96 &&90.99&\underline{2.95}&&\textbf{92.59}&53.62&&90.43&3.57&&\underline{92.49}&52.85&&87.91&7.64&&90.80&\textbf{1.96}\\ 
\midrule
UDA~\cite{xie2020unsupervised} & 94.21 & 97.25 &&96.25&1.23&&\underline{97.72}&99.81&&96.47&90.37&&\textbf{97.78}&99.81&&96.44&1.12&&95.78&\textbf{0.00}\\
\midrule
Fixmatch~\cite{sohn2020fixmatch} & 97.61 & 98.76 &&95.97&2.15&&\underline{97.57} &99.18&&96.25&\underline{1.57}&&\textbf{97.67}&99.10&&96.86&\underline{0.64}&&96.49&\textbf{0.00}\\
\midrule
Flexmatch~\cite{zhang2021flexmatch} &93.90&98.82 &&93.78&\underline{2.27}&&\underline{95.29}&99.51&&93.32&2.42&&\textbf{95.56}&99.48&&92.38&11.16&&93.45&\textbf{0.00}\\
\bottomrule
\end{tabular}}
\end{subtable}

\begin{subtable}{\textwidth}
\centering
\vspace*{.3em}
\caption{STL10}
\scalebox{0.78}{
\begin{tabular}{ccccccccccccccccccccc}
\toprule
\multirow{2}{*}{Algorithm} & \multicolumn{2}{c}{No Defense} & &\multicolumn{2}{c}{FT} & &\multicolumn{2}{c}{FP~\cite{liu2018fine}} &  & \multicolumn{2}{c}{NAD~\cite{li2020neural}} &  &\multicolumn{2}{c}{I-BAU~\cite{zeng2021adversarial}} &  & \multicolumn{2}{c}{DePuD~\cite{yan2021deep}} & & \multicolumn{2}{c}{UPure} \\ 
\cmidrule{2-3} \cmidrule{5-6} \cmidrule{8-9} \cmidrule{11-12} \cmidrule{14-15} \cmidrule{17-18} \cmidrule{20-21}
& \multicolumn{1}{c}{BA} & ASR & &\multicolumn{1}{c}{BA} & ASR & &\multicolumn{1}{c}{BA} & ASR & & \multicolumn{1}{c}{BA} & ASR & & \multicolumn{1}{c}{BA} & ASR && \multicolumn{1}{c}{BA} & ASR && \multicolumn{1}{c}{BA} & ASR \\
\midrule
Mixmatch~\cite{berthelot2019mixmatch} & 84.89&79.90&&86.44&8.93&&84.20&93.99&&\underline{86.51}	&8.56&&84.31&93.68&&84.92&\underline{2.53}&&\textbf{87.45}&\textbf{0.25}\\
\midrule
Remixmatch~\cite{berthelot2019remixmatch} &90.89&74.88&&89.35&4.74&&\textbf{90.21}&75.47&&89.39&4.49&&\underline{90.19}&73.88&&84.54&\underline{1.75}&&81.66&\textbf{1.71}\\ 
\midrule
UDA~\cite{xie2020unsupervised} & 90.12&96.33&&86.09&1.63&&88.11&99.60&&82.36&0.94&&87.41&99.57&&88.78&\underline{0.15}&&\textbf{91.07}&\textbf{0.00}\\ 
\midrule
Fixmatch~\cite{sohn2020fixmatch} &91.84&99.56&&\underline{91.49}&2.51&&\textbf{91.95}&99.63&&84.53&2.29&&90.85&99.58&&87.71&\underline{0.35}&&91.22&\textbf{0.00}\\ 
\midrule
Flexmatch~\cite{zhang2021flexmatch} &90.46&99.78&&90.04&6.08&&90.19&99.81&&89.26&4.76&&\underline{90.31}&99.74&&87.88&\underline{0.40}&&\textbf{90.95}&\textbf{0.00}\\ 
\bottomrule
\end{tabular}}
\end{subtable}

\begin{subtable}{\textwidth}
\centering
\vspace*{.3em}
\caption{CIFAR100}
\scalebox{0.78}{
\begin{tabular}{ccccccccccccccccccccc}
\toprule
\multirow{2}{*}{Algorithm} & \multicolumn{2}{c}{No Defense} & &\multicolumn{2}{c}{FT} & &\multicolumn{2}{c}{FP~\cite{liu2018fine}} &  & \multicolumn{2}{c}{NAD~\cite{li2020neural}} &  &\multicolumn{2}{c}{I-BAU~\cite{zeng2021adversarial}} &  & \multicolumn{2}{c}{DePuD~\cite{yan2021deep}} & & \multicolumn{2}{c}{UPure} \\ 
\cmidrule{2-3} \cmidrule{5-6} \cmidrule{8-9} \cmidrule{11-12} \cmidrule{14-15} \cmidrule{17-18} \cmidrule{20-21}
& \multicolumn{1}{c}{BA} & ASR & &\multicolumn{1}{c}{BA} & ASR & &\multicolumn{1}{c}{BA} & ASR & & \multicolumn{1}{c}{BA} & ASR & & \multicolumn{1}{c}{BA} & ASR && \multicolumn{1}{c}{BA} & ASR && \multicolumn{1}{c}{BA} & ASR \\
\midrule
Mixmatch~\cite{berthelot2019mixmatch} & 70.50&93.16&&68.94&\underline{0.66}&&68.90&92.35&&67.69&0.68&&\textbf{69.18}&92.23&&65.17&4.56&&\underline{69.03}&\textbf{0.03}\\
\midrule
Remixmatch~\cite{berthelot2019remixmatch} & 72.60&94.80&&71.57&\underline{0.34}&&\textbf{72.13}&96.86&&70.85&0.40&&\underline{72.12}&96.74&&53.05&7.82&&61.82&\textbf{0.01}\\ 
\midrule
Fixmatch~\cite{sohn2020fixmatch} & 70.44&99.24&&68.97&1.07&&69.80&99.28&&56.99&0.99&&\underline{70.02}&99.24&&60.36&\underline{0.02}&& \textbf{71.04}&\textbf{0.00}\\ 
\bottomrule
\end{tabular}}
\end{subtable}
\end{center}
\vspace{-0.2in}
\end{table*}

\noindent \textbf{Attack settings.} In this paper, we use backdoors from~\cite{shejwalkar2023perils} rather than those used in previous attacks~\cite{feng2022unlabeled,yan2021dehib}. This choice was made because, without knowledge of labeled training data, \cite{feng2022unlabeled} achieved relatively low attack success rates of up to 33\% and 37.5\%, respectively. In addition, this work~\cite{shejwalkar2023perils} points out that the perturbation-based trigger~\cite{yan2021dehib} is not effective against SSL, achieving only a 1\% ASR on CIFAR10. Therefore, we do not consider this case in our experiments. For the backdoor attack, we fix a poisoning rate $\gamma = 0.2\%$. In addition, we aim to compromise the entire test dataset with a backdoor trigger, rather than just a single sample in~\cite{carlini2021poisoning}.

%Consistent with \cite{shejwalkar2023perils}, we first select image samples with the seventh class as the target for the four datasets used in experiments. Then, for the backdoor attack, we poison these target samples with repetitive trigger patterns and then introduce these poisoned samples into the unlabeled data. Furthermore, a poisoning rate of $\gamma = 0.2\%$ is set. 

\noindent \textbf{Defense settings.} We compare UPure with five backdoor defenses, \ie, Fine-tuning (FT), Fine-pruning (FP)~\cite{liu2018fine}, Neural Attention Distillation (NAD)~\cite{li2020neural}, Backdoor Adversarial Unlearning (I-BAU)~\cite{zeng2021adversarial}, and Detection-and-Purification (DePuD)~\cite{yan2021deep}\footnote{DePuD, a backdoor defense designed for SSL, erases the poisoned feature on images before training process.}. We evaluate the first four baseline methods using $5\%$ of the labeled data as extra clean data for defense purposes. we particularly note that UPure does not need any extra clean data for defense.

\noindent \textbf{Evaluation metrics.} We evaluate the effectiveness of backdoor defenses using two common metrics: Benign Accuracy (BA), which measures the model's accuracy on benign test data without a backdoor trigger, and Attack Success Rate (ASR), which evaluates the model's accuracy when tested on non-target class data with the specified trigger. Among all experimental results, the bold (underlined) numbers indicate the best (second best) results. We run all experiments for $200,000$ iterations. Our results are an average of three independent trials.

\subsection{Comparison with state-of-the-art (SOTA) Defense Methods}\label{expt:main}

\cref{tab:main_result} shows the performance of various backdoor defenses across different SSL algorithms on datasets, including CIFAR10, SVHN, STL10, and CIFAR100. Our main results were obtained using the ``\textit{Add perturbation}'' strategy. In~\cref{sec:discussion}, we present results from other strategies to justify our selection.

\setlength{\tabcolsep}{4pt}
\begin{table*}[!ht]
\begin{center}
\caption{Comparison of defense against other repetitive triggers on CIFAR-10.}
\vspace{-0.05in}
\label{tab:abl-other}
\scalebox{0.85}{
\begin{tabular}{c|c|ccccccc}
\toprule
%\hline\noalign{\smallskip}
Attack & Metric & No Defense & FT & FP & NAD & I-BAU & DePuD & Ours \\
\midrule
% \cmidrule{2-3} \cmidrule{5-6}
\multirow{2}{*}{Invisible~\cite{li2023embarrassingly,wang2022invisible}} &BA&94.38&93.63&\underline{94.27}&93.62&\textbf{94.38}&88.69&92.89\\
& ASR &91.20 & 3.88&1.40&3.88&1.23&\underline{1.16}&\textbf{0.00}\\
\midrule	
\multirow{2}{*}{Visible~\cite{shejwalkar2023perils}} &BA &92.80& \underline{92.36}&92.71&92.33&\textbf{92.96}&85.24&91.05\\
& ASR &97.54&3.62&95.00&\underline{3.28}&97.50&94.58&\textbf{0.00}\\
\bottomrule
\end{tabular}}
\end{center}
\vspace{-0.2in}
\end{table*}
\setlength{\tabcolsep}{1.4pt}

\noindent \textbf{Preserves benign accuracy (BA).} 
With the advantage of strong augmentation applied to unlabeled data within SSL algorithm, UPure adds perturbations to high-frequency components of images, keeping competitive BA to enhance robustness against backdoor attacks. For instance, in CIFAR10 with FixMatch, we observe only a marginal decrease to 91.05\%. For STL10 datasets, MixMatch and UDA achieve higher BA than those of the backdoored model. This is because UPure can be regarded as a kind of data augmentation, which tends to improve the accuracy of SSL training. We also observe that the BA (ASR) of UPure in ReMixMatch slightly declines (increases). We speculate that distribution alignment in ReMixMatch affects our performance. Compared to other defenses, UPure exhibits effectiveness against backdoor attacks in SSL, while maintaining or even better model accuracy on benign samples. Importantly, without the need for additional clean data, UPure shows comparable performance to the original model in BA in most cases. Due to space limitations, we provide additional results in supplementary to show that UPure can preserve BA under different scenarios, such as training on a clean training data set and untargeted poisoning attacks.

\noindent \textbf{Outperforms attack success rate (ASR).} Across multiple datasets, UPure consistently demonstrates superior or competitive performance in reducing ASR, often achieving rates of zero nearly, indicating highly effective mitigation of backdoor attacks. For instance, in CIFAR10, UPure achieves the lowest ASR values (\ie, $0.00\%$), particularly in UDA, Fixmatch, and Flexmatch, significantly surpassing other methods. Similarly, in SVHN and STL10, UPure maintains $0.00\%$ ASR for several SSL algorithms. This means that UPure improves SSL robustness against backdoor attacks. While existing backdoor defenses achieve lower ASR in some cases  (\eg, $1.07\%$ and $0.99\%$ in FT and NAD, respectively), SOTA defense methods typically require additional clean data to mitigate the backdoor impact on suspicious models. It is worth noting that UPure does not need extra clean samples. We also report more experimental results in the supplementary.

\subsection{Comparison with other repetitive trigger}
According to the lessons~\cite{shejwalkar2023perils} on SSL, we further evaluate the effectiveness of UPure on existing repetitive triggers~\cite{wang2022invisible, li2023embarrassingly}. FTrojan~\cite{wang2022invisible}, an invisible backdoor trigger, injects perturbations on the frequency domain to make it easier to function during model convergence. CTRL~\cite{li2023embarrassingly} then is a variant version of FTrojan~\cite{wang2022invisible} in self-supervised learning. To validate the efficacy of UPure, we consider such trigger patterns and compare them to SOTA defenses.

As depicted in~\cref{tab:abl-other}, all defense approaches, except UPure, demonstrate higher ASR rates when faced with a visible trigger compared to an invisible one, even striving to maintain a higher BA. Given the greater challenge posed by the visible trigger, we prioritize their analysis in~\cref{expt:main}. Surprisingly, UPure achieves an ASR of $0.00\%$ for both types of attacks, indicating complete defense success, despite slight variations in BA. This underscores UPure is able to resist backdoor triggers that can arise in both supervised and self-supervised learning scenarios.

% \input{tables/tab-abl-rate}

%In comparison with the above techniques, UPure exhibits robustness against backdoor attacks, with only a slight drop in BA and remarkably low ASR.

\section{Discussion and Limitations}\label{sec:discussion}

% \subsection{Ablation study}\label{Sec: Ablation}

\noindent \textbf{Different quantizers in DCT spectrum.}
To understand the impact of different quantizers in the DCT spectrum, UPure considers three types of perturbation strategies in high-frequency components: ``\textit{Turn to zero},'' ``\textit{Replace from other},'' and ``\textit{Add perturbation}''. As shown in~\cref{tab:abl-perturb}, we observe that erasing the high-frequency components of the images still preserves the original BA, but leaves some ASR (\eg, 0.72\% in ``\textit{Turn to zero}''). The replacement strategy leads to a greater decline in BA compared with others. Finally, we find that the ``\textit{Add perturbation}'' strategy achieves a good trade-off between ASR and BA compared with the above two strategies. On the other hand, \cref{tab:abl-perturb} reports the fidelity of the images generated after these three strategies. We observe that ``\textit{Add perturbation}'' has higher PSNR and SSIM values compared with others. Therefore, we choose this strategy to minimize BA performance degradations in~\cref{expt:main}.

\setlength{\tabcolsep}{4pt}
\begin{table}[!ht]
\begin{center}
\caption{Comparison with our three strategies in FixMatch SSL algorithms.}
% \vspace{-0.05in}
\label{tab:abl-perturb}
\scalebox{0.85}{
\begin{tabular}{lcccc}
\toprule
%\hline\noalign{\smallskip}
\multirow{2}{*}{Strategy} & \multicolumn{4}{c}{CIFAR10}\\
\cmidrule{2-5}
& \multicolumn{1}{c}{BA} & ASR  & PSNR & SSIM\\
\midrule
Turn to zero &90.52&0.72&33.03&$0.9618{\pm.075}$\\
Replace from other &88.86&0.00&30.65&$0.9461{\pm.088}$\\
Add perturbation &91.05&0.00&45.43&$0.9969{\pm.012}$\\
\bottomrule
\end{tabular}}
\end{center}
\vspace{-0.1in}
\end{table}

\setlength{\tabcolsep}{1.4pt}

\noindent \textbf{RDP Trade-offs of UPure.}
We delve deeper into the relationship between UPure and RDP trade-offs. Empirically, we measure the squared error as the distortion and Fréchet Inception Distance (FID)~\cite{heusel2017gans}(\ie, Wasserstein-2 distance) as the perception, respectively, as outlined in~\cite{zhang2021universal}.

In~\cref{tab:rdp-real}, we utilize $2,048$ samples from CIFAR10 as a toy example to compute the distortion and perception. The theoretical lower bound of rate, as given by RDP function for Gaussian source in~\cite{zhang2021universal}, indicates values of $1.76$, $1.43$, and $3.51$ for ``\textit{Turn to zero},'' ``\textit{Replace from other},'' and ``\textit{Add perturbation},'' respectively, under the setting of image size of $32\times32$ and $\tau=16$. However, the rate in our defense exceeds the theoretical lower bound across the three strategies. This indicates that there is still room for improvement in the design of quantizers. 

\vspace{-0.05in}
\begin{table}[!htp]
\centering
\caption{The exact value of RDP of our defense.}
\vspace{-0.05in}
\scalebox{0.85}{
\begin{tabular}{lccc}
\toprule
Strategy  & Distortion $\downarrow$ & Perception $\downarrow$ & Rate lower bound\\
\midrule
Turn to zero& 8.69&13.35& 1.76\\
Replace from other&13.84& 24.73&1.43\\
Add perturbation& 0.7745&9.6e-12&3.51\\
\bottomrule
\end{tabular}}
\label{tab:rdp-real}
% \vspace{-0.05in}
\end{table}

\begin{figure}[!htp]
    \centering
    \includegraphics[width=0.25\textwidth]{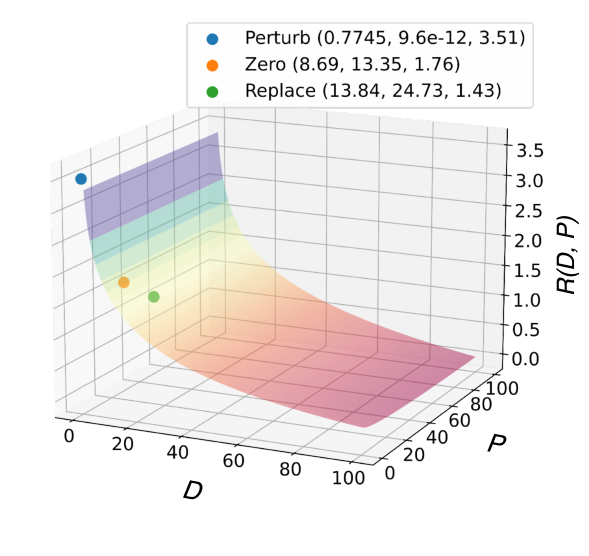}
    \vspace{-0.1in}
    \caption{Visualize RDP function of UPure on CIFAR10.}
    \vspace{-0.1in}
    \label{fig:5-discuss}
\end{figure}

As illustrated in~\cref{fig:5-discuss}, we depict the position of these three strategies on the RDP function. Specifically, ``\textit{Add perturbation}'' requires a higher rate, compared to the other two that have comparable rates. While the closed-form solution of the RDP function does not explicitly offer a practical design for the quantizer, it still allows us to verify and analyze the validity of rate we used. Here note that we use the RDP trade-offs to derive a minimum $\tau$ as an adaptive selection strategy. There indeed exists a feasible $\tau$ if we have a success probability in defense.

    % \hspace{0.1ex}
    % \begin{minipage}{0.485\linewidth}
    %     \centering        \includegraphics[width=\linewidth]{figures/Figure5-lem2.pdf}
    %     \caption{The failure probability $p_f^{repet}$ with different $q_k$ and $\beta$.}
    %     \label{fig:6-lem2}
    % \end{minipage}

\noindent \textbf{Quantitative explanation of Lemma~\ref{lem:1} and Lemma~\ref{lem:2}.} In~\cref{fig:6-varylem1}, as the dimensions of $\mathsf{H}_c$ and $\mathsf{W}_c$ become higher, it will enhance the effectiveness of our defense mechanism, especially in scenarios, where the size of the backdoor trigger is unknown. This stems from the observation that larger trigger sizes $\mathsf{H}_t \times \mathsf{W}_t$ tend to simplify the defense process. Note that the minimum image size requirement is $32\times 32$, at least twice larger than the cutout region for unrestricted movement.

In~\cref{fig:7-lem2}, given $\mathsf{N}=16$, the behavior of $p_f^{repet}$ is influenced by $\mathsf{M}$, $\beta$, and $q_k$. The $x$-axis represents discrete values of $\beta$, ranging from $1$ to $11$, where $11$ comes from the minimum of $(\mathsf{N}-\mathsf{M}-1)$. The $y$-axis quantifies the probability $p_f^{repet}$, calculated for each $\beta$ value under different settings of $q_k$ and $\mathsf{M}$ (\ie, $q_k = 0.2, 0.5, 0.8$ and $\mathsf{M} = 2,4$). The markers on the lines (circles for $q_k = 0.2$, squares for $q_k = 0.5$, and triangles for $q_k = 0.8$) indicate the computed probabilities for discrete $\beta$ values. The dashed line represents different $\mathsf{M}$ values. From~\cref{fig:7-lem2}, we observe that higher $q_k$  and smaller $\mathsf{M}$ dominate the lower bound, with instances where it decreases to $0$ under certain conditions. 
% Notably, our defense achieves the elimination of backdoors (\ie, $0\%$ in ASR) in most cases.

\begin{figure}[!htp]
    \centering
    \begin{minipage}{0.48\linewidth}
        \centering        \includegraphics[width=\linewidth]{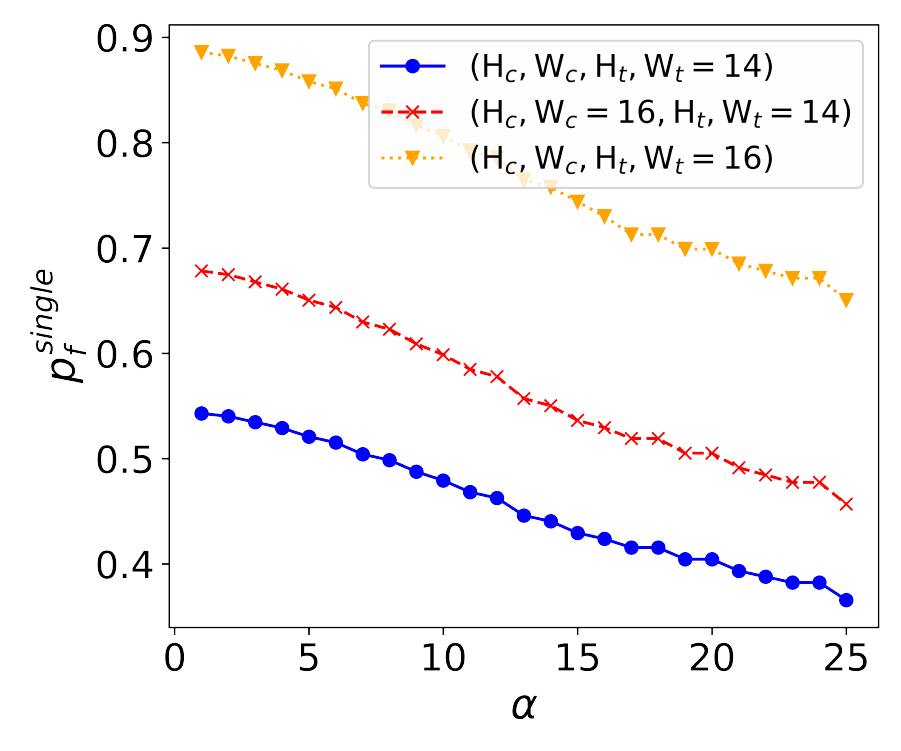}
        \vspace{-0.15in}
        \caption{The failure probability $p_f^{single}$ with different $\mathsf{H}_c$, $\mathsf{W}_c$, $\mathsf{H}_t$, $\mathsf{W}_t$, and $\alpha$.}
        \label{fig:6-varylem1}
    \end{minipage}
    \hspace{0.2ex}
    \begin{minipage}{0.48\linewidth}
        \centering        \includegraphics[width=\linewidth]{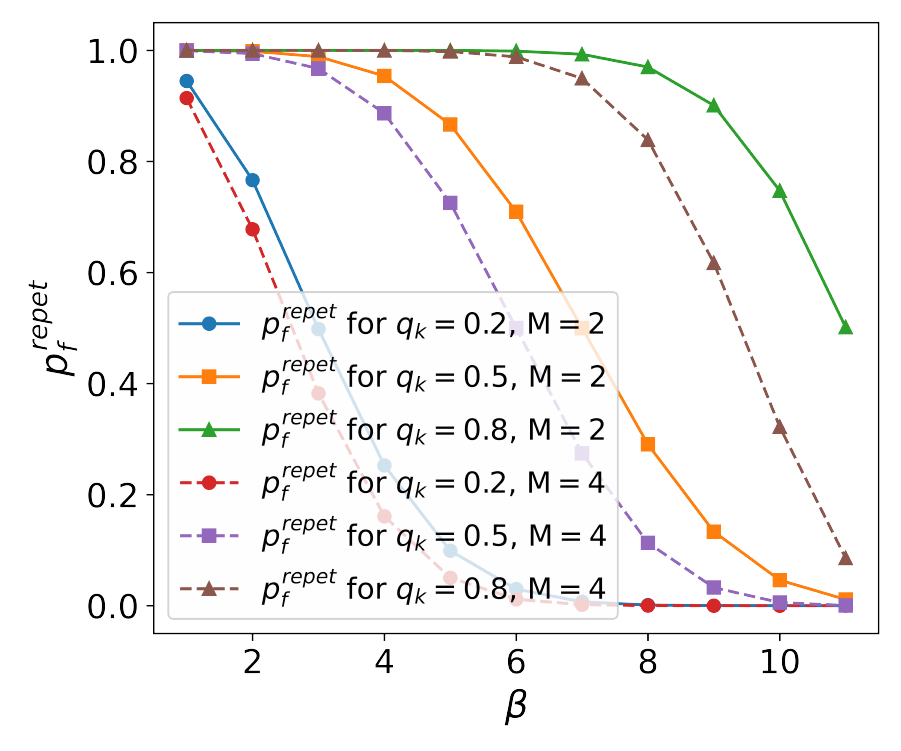}
         \vspace{-0.15in}
        \caption{The failure probability $p_f^{repet}$ with different $q_k$ and $\beta$.}
        \label{fig:7-lem2}
    \end{minipage}
\end{figure}

\vspace{-0.1in}
\section{Conclusions}
% ensure --> seek to guarantee

In this paper, we propose UPure to defend against backdoor attacks on SSL. UPure purifies certain frequency coefficients of the unlabeled training data and makes repetitive trigger patterns ineffective. Our study from the theoretical perspective indicates that we not only derive an appropriate perturbation region but also explain the proposed quantizers by the theorem. Our experiments show that UPure is effective and suppresses repetitive trigger patterns, outperforming existing defense methods. With this work, we put to light an important direction for defense against backdoor attacks in SSL scenarios.

\section*{Acknowledgements}
    This work was supported by the National Science and Technology Council (NSTC), 
    Taiwan, ROC, under Grant NSTC 112-2221-E-001-011-MY2. We thank to National Center for High-performance Computing (NCHC) for providing computational and storage resources.
    
\clearpage
%%%%%%%%% REFERENCES
{\small
\bibliographystyle{ieee_fullname}
\bibliography{egbib}
}

% \clearpage
\newpage
% \appendix

%%%%%%%%%%%%    Supplementary Material    %%%%%%%%%%%%
% \input{Supplementary Material}
% \iffalse % refer to line 972

\renewcommand{\thesection}{\Alph{section}}
\setcounter{section}{0}
\setcounter{page}{1}
\setcounter{equation}{5}
\setcounter{theorem}{1}
\setcounter{table}{4}
\setcounter{figure}{6}

The content of Supplementary Material is summarized as follows: 1) In~\cref{app:A}, we discuss the details of SSL algorithms used in our work; 2) In~\cref{app:B}, we state the implementation and training details we used in the experiment in terms of datasets, hyper-parameters, and model architectures to ensure that our method can be reproduced; 3) In~\cref{app:C}, we first recall the rate-distortion theory. Then, we present the rate-distortion-perception (RDP) trade-offs from theoretical derivation. Last, we discuss the justification that repetitive trigger patterns are ineffective.

\section{SSL Algorithms}\label{app:A}
We conducted the main experiments using five state-of-the-art SSL algorithms, briefly summarized as follows.

\textbf{(a) MixMatch}~\cite{berthelot2019mixmatch} creates various weakly augmented versions of each unlabeled sample. It then calculates the outputs of the current model for these versions and sharpens the average prediction by raising all its probabilities before normalization. This refined prediction acts as the label for the unlabeled sample. In addition, MixMatch employs mixup regularization on all training data and trains the model using cross-entropy loss.

\textbf{(b) ReMixMatch}~\cite{berthelot2019remixmatch} replaces weak data augmentation in MixMatch with AutoAugment and enhances consistency regularization through augmentation anchoring. This technique involves using predictions made on a weakly augmented version of an unlabeled sample as the target prediction for a strongly augmented version of the same sample. Additionally, it employs distribution alignment, which normalizes the new model predictions on unlabeled data using the running average of model predictions on unlabeled data, significantly enhancing the resulting model's performance.

\textbf{(c) Unsupervised data augmentation (UDA)}~\cite{xie2020unsupervised} exhibits superior performance on SSL with the benefit of strong data augmentations, such as RandAugment, instead of weak data augmentations used in MixMatch. Specifically, RandAugment randomly chooses a few powerful augmentations to improve the generalization and robustness of the model.

\textbf{(d) FixMatch}~\cite{sohn2020fixmatch} combines consistency regularization and pseudo-labeling while simplifying the complex ReMixmatch algorithms. In FixMatch, weak augmentation follows a standard flip-and-shift strategy, randomly flipping images horizontally with a given probability. For strong augmentation, RandAugment and CTAugment are employed.  Furthermore, Cutout is followed after these above operations.

\textbf{(e) FlexMatch}~\cite{zhang2021flexmatch} presents a curriculum pseudo-labeling (CPL) approach, which flexibly sets the threshold of pseudo-labels in different categories in each training iteration. Then, according to the model's learning status, FlexMatch selects more informative unlabeled data and their pseudo-labels.

\section{Experimental Details}\label{app:B}

\subsection{Datasets and DNNs}\label{app:B.1}
We list the detailed dataset and model architecture used in our experiments, as summarized in Table~\ref{tab:details_dataset}, which shows the number of classes in each dataset alongside the number of training and test data. Compared to CIFAR10, SVHN is designed for recognizing street-view house numbers and is not class-balanced. STL10 is a 10-class classification task specially designed for semi-supervised learning research. Besides, we use WideResNet~\cite{zagoruyko2016wide} as model architecture in our experiments.

\begin{table*}[!htbp]
\caption{Statistics of datasets used in our experiments}\label{tab:details_dataset}
\centering
\scalebox{0.85}{
\begin{tabular}{llclclclclc}
  \toprule
  Dataset &&Input size &&Classes && Unlabeled Training data &&Test data && Model \\ 
 \midrule \midrule
  CIFAR10 &&$3\times32\times32$ && 10 && 50000 && 10000 &&WideResNet-28-2 \\  
  SVHN && $3\times32\times32$ && 10 && 73257 && 26032 &&WideResNet-28-2 \\
  STL10 && $3\times96\times96$ && 10 && 100000 && 1000 && WideResNet-28-2 \\
  CIFAR100 && $3\times32\times32$ && 100 && 50000 && 10000 &&WideResNet-28-8 \\ 
  \bottomrule
\end{tabular}}
\end{table*}

\subsection{Training size of Labeled data}\label{app:B.label}
Table~\ref{tab:setup_data} shows that different sizes of labeled data depend on the algorithm in our experiments. Even when there is little labeled training data (\ie, 100 samples for CIFAR10), UPure still effectively alleviates backdoor effects and maintains model accuracy.

% Unless mentioned otherwise, we utilize these sizes in our experiments.

\begin{table}[!ht]
\caption{Sizes of labeled data for training a model across different SSL algorithms.}\label{tab:setup_data}
\centering
\scalebox{0.75}{
\begin{tabular}{lccccc}
  \toprule
  \multirow{2}{*}{Dataset} & \multicolumn{5}{c}{Algorithm} \\ 
  \cmidrule{2-6}
  & MixMatch & ReMixMatch & UDA & FixMatch & FlexMatch \\  \midrule
  CIFAR10 & 4000 & 100 & 100 & 100 & 100 \\  
  SVHN & 250 & 250 & 100 & 100 & 100 \\
  STL10 & 3000 & 1000 & 1000 & 1000 & 1000 \\
  CIFAR100 & 10000 & 2500 & 2500 & 2500 & 2500 \\ 
  \bottomrule
\end{tabular}}
\vspace{-0.1in}
\end{table}

\subsection{Training Settings}\label{app:B.3}
Following the training settings in~\cite{wang2022usb}, we adopted an SGD optimizer with a momentum of $0.9$, a weight decay of $1 \times 10^{-3}$, layer decay of $1$, a crop ratio of $0.875$, and an initial learning rate of $3 \times 10^{-2}$ in our experiments. With a batch size of $64$, we trained the WideResNet-28-2 model for $200,000$ iterations, as shown in Table~\ref{tab:details_dataset}. Note that for CIFAR100, we trained the WideResNet-28-8 model. All the other settings in SSL algorithms are the same as the original configurations in the USB package~\cite{wang2022usb}. Furthermore, we executed the experiments with a single NVIDIA RTX3090 GPU. Nevertheless, SSL is still cost-expensive for computation. For instance, in our experiments, the FixMatch algorithm requires approximately $16$ hours to complete $200,000$ iterations on CIFAR10, whereas, for MixMatch and ReMixMatch, each takes about $6$ hours. Training for the same number of iterations on CIFAR100 using FixMatch extends to $3.5$ days, leading us to exclude experiments with UDA and FlexMatch on CIFAR100.

\subsection{UPure algorithms}\label{app:B.4}
Our training procedure is described in Algorithm~\ref{algo: UPure}. Specifically, UPure allows a defender to purify the unlabeled training data for SSL in the frequency domain using three strategies. This approach emphasizes preprocessing data before training, rather than identifying backdoor samples within the model.

\newcommand\mycommfont[1]{\footnotesize\ttfamily\textcolor{blue}{#1}}
\SetCommentSty{mycommfont}

\begin{algorithm}[!ht]
    \DontPrintSemicolon
	\caption{Training of UPure}
	\label{algo: UPure}
	\textbf{Input}: Training data $D_{train} = D_{\ell} \cup D_{u}$, an SSL learning algorithm $\mathcal{A}_{ss\ell}$, a loss function $\mathcal{L}_{ss\ell}$, a DCT function $\mathcal{T}_{dct}$ and an inverse DCT $\mathcal{T}_{idct}$, three strategies     $\mathcal{S} = \{S_1, S_2, S_3\}$, where $S_1$ is ``Turn to zero,'' $S_2$ is ``Replace from other,'' and $S_3$ is ``Add perturbation,'' and a predefined region $\tau \times \tau$ for perturbation.
 
    \textbf{Output}: Clean Model $\mathcal{M}^*$.

    \tcc{Step 0: Pick a strategy}
    $S^\star \leftarrow \mathcal{S}$
    
    \tcc{$D_u$ is a unlabeled dataset}
    \For{img $\in D_{u}$}{
        $D_u = D_u \backslash \{img\}$ \tcc*{remove image from $D_u$}
        \tcc{Step 1: Transform to DCT spectrum}
        $spectrum = \mathcal{T}_{dct}$($img$)
        
        \tcc{Step 2: Apply UPure}
        \If{$S^*== S_1$}
        {
            $spectrum[-\tau:, -\tau:]$ = $\mathbf{0}$
        }
        \ElseIf{$S^*== S_2$}
        {
            $img_{\ell} \sim D_{\ell}$  \tcc*{randomly select an image from $D_{\ell}$} 
            $spectrum_{\ell}$ = $\mathcal{T}_{dct}$($img_{\ell}$) \\
            $spectrum[-\tau:, -\tau:]$ = $\mathbf{0}$ \\
            $spectrum[-\tau:, -\tau:]$ += $spectrum_{\ell}$
        }
        \ElseIf{$S^*== S_3$}
        {
            $\eta \sim \mathcal{N}(\mathbf{0}, \sigma^2I)$ \tcc*{sample a noise} 
            $spectrum[-\tau:, -\tau:]$ += $\eta$ \\
        }
        \tcc{Step 3: Inverse transform to pixel domain}
        $img = \mathcal{T}_{idct}(spectrum)$ \\
        % Apply data augmentation on image \tcc*{always apply cutout} 
        Apply cutout operation on $img$ \\
        $D_u = D_u \cup \{img\}$ \tcc*{place image back into $D_u$}
    }    \tcc{Adopt an SSL algorithm to train a model $\mathcal{M}^*$}
     Randomly initialize a model $\mathcal{M}$\\
     $\mathcal{M}^*$=$\mathcal{A}_{ss\ell}(D_{train}, \mathcal{L}_{ss\ell}, \mathcal{M})$ 
    \Return Model $\mathcal{M}^*$
\end{algorithm}

\subsection{Details of Backdoor Defense}\label{app:B.5}
We evaluate our experiments using four post-processing and one in-processing backdoor defenses for comparison, briefly summarized as follows.

\textbf{(a) Fine-tuning} is a common baseline to alleviate pernicious behavior on the backdoored model using additional clean labeled data. In our experiments, we utilize the labeled training data of SSL algorithm for finetune.

\textbf{(b) Fine-pruning}~\cite{liu2018fine} first prunes the inactivated neurons of the last layer by benign data and then finetunes the model to prevent the activation of backdoors. 

\textbf{(c) Neural Attention Distillation (NAD)}~\cite{li2020neural} fine-tunes a teacher model on a subset of benign data and distills the knowledge of the fine-tuned model into backdoored model for purification.

\textbf{(d) Backdoor Adversarial Unlearning (I-BAU)}~\cite{zeng2021adversarial} purifies the backdoored model by using an implicit hyper-gradient, which facilitates the model convergence and the generalizability of robustness given a small number of clean data.

\textbf{(e) Detection-and-Purification (DePuD)}~\cite{yan2021deep} utilizes GradCam to detect suspicious region (\emph{i.e.}, backdoor triggers) in images. According to the model's attention, the purification operation employs differential privacy to alleviate the effects of poisoned images.

\subsection{Visualization results of UPure}\label{app:B.6}
\vspace{-0.05in}
We present visualization results of UPure in~\cref{fig:B-visual}, comparing clean and backdoored samples with purified samples obtained from our three strategies. We utilize a repetitive backdoor attack in our work, as detailed in~\cite{shejwalkar2023perils}, with pixel intensity, width, and gap set to $30$, $1$, and $1$, respectively. As can be seen from~\cref{fig:B-visual}, the samples generated from ``\textit{Turn to zero}'' and ``\textit{Replace from others}'' are blurry while the ``\textit{Add perturbation}'' strategy performs minimal disturbance on the backdoored samples, which is sharper than the other two. This indicates that ``\textit{Add perturbation}'' is better than the other two to maintain the original fidelity of images. Note that the quantitative results of UPure are shown in~\cref{tab:supp-metric}.

% human perception cannot distinguish its difference. blurriness
% To demonstrate the fidelity of images on UPure,
% visually as smooth as the backdoor samples in the spatial domain. 

% Since human cannot distinguish the difference between these three strategies visually, we numerically measure the differences in Table~\ref{tab:abl-perturb}.

\setlength{\tabcolsep}{4pt}
\begin{table}[!htbp]
\begin{center}
\caption{Quantitative results with our three strategies on CIFAR10.}
\label{tab:supp-metric}
\scalebox{0.9}{
\begin{tabular}{lcc}
\toprule
%\hline\noalign{\smallskip}
\multirow{2}{*}{Strategy} & \multicolumn{2}{c}{CIFAR10}\\
\cmidrule{2-3}
 & PSNR & SSIM\\
\midrule
Turn to zero&33.03&$0.9618{\pm.075}$\\
Replace from other &30.65&$0.9461{\pm.088}$\\
Add perturbation &45.43&$0.9969{\pm.012}$\\
\bottomrule
\end{tabular}}
\end{center}
\vspace{-0.15in}
\end{table}

\setlength{\tabcolsep}{1.4pt}

\begin{figure}[!ht]
    \centering
    \includegraphics[width=0.7\linewidth]{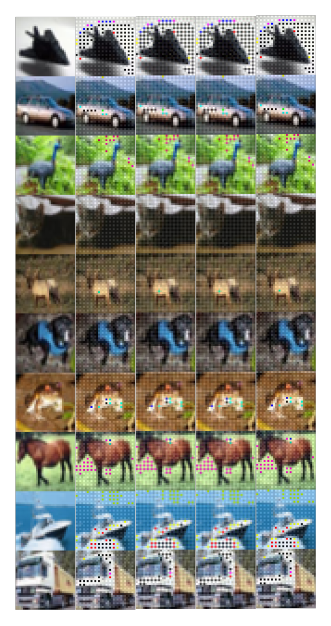}
    \caption{Visualization results on CIFAR10. (From Left to Right) The first and second columns display the clean and backdoored samples, respectively. The third, fourth, and fifth columns show purified samples obtained from  using  ``\textit{Turn to Zero},'' ``\textit{Replace from others},'' and ``\textit{Add Perturbation}'' strategies of UPure, respectively.}
    \label{fig:B-visual}
\end{figure}

% \begin{figure}
%     \centering
%     \begin{subfigure}[b]{0.95\textwidth}
%         \centering
%         \includegraphics[width=\textwidth]{}
%         \caption{Clean Samples}
%         \label{fig:figureB-a}
%     \end{subfigure}
%     \begin{subfigure}[b]{0.95\textwidth}
%         \centering
%         \includegraphics[width=\textwidth]{}
%         \caption{Backdoored Samples}
%         \label{fig:figureB-b}
%     \end{subfigure}
%     \caption{The visualization of Clean and Backdoored samples on CIFAR10.}
%     \label{fig:figureB-1}
% \end{figure}
% \vspace{-0.2in}
% \begin{figure}
%     \centering
%     \begin{subfigure}[b]{0.95\textwidth}
%         \centering
%         \includegraphics[width=\textwidth]{}
%         \caption{``\textit{Turn to Zero}'' Samples}
%         \label{fig:figureB-c}
%     \end{subfigure}
%     \begin{subfigure}[b]{0.95\textwidth}
%         \centering
%         \includegraphics[width=\textwidth]{}
%         \caption{``\textit{Replace from others}'' Samples}
%         \label{fig:figureB-d}
%     \end{subfigure}
%     \begin{subfigure}[b]{0.95\textwidth}
%         \centering
%         \includegraphics[width=\textwidth]{}
%         \caption{``\textit{Add Perturbation}'' Samples}
%         \label{fig:figureB-e}
%     \end{subfigure}
%     \caption{The visualization samples of UPure on CIFAR10.}
%     \label{fig:figureB-2}
% \end{figure}

\subsection{More experiment results}\label{app:B.7}

\subsubsection{Non-poisoned dataset applied UPure}\label{app:B.7.clean}

Since UPure can be viewed as a form of data augmentation, we further train a model using UPure on a clean training dataset. For example, with CIFAR-10 as the training set, MixMatch and FlexMatch attain high BA, close to their original performance, as shown in~\cref{tab:clean}. We find that ReMixMatch is more susceptible to high-frequency component perturbations, resulting in a significant decrease in BA. This observation implies that modern SSL algorithms can adapt UPure to protect unlabeled data and be robust to backdoor attacks in SSL scenarios.

\begin{table}[!ht]
\begin{center}
\caption{Evaluation on clean CIFAR10 dataset applied UPure.}
\label{tab:clean}
\resizebox{0.9\linewidth}{!}{%
\begin{tabular}{cccccccccc}
\toprule
 & MixMatch && ReMixMatch && UDA && FixMatch && FlexMatch \\ \midrule
BA & 89.31\% && 77.82\% && 88.24\% && 87.96\% && 94.18\% \\ \midrule
ASR & 0.21\% && 1.85\% && 0.34\% && 0.33\% && 0.26\% \\ \bottomrule
\end{tabular}}
\end{center}
\end{table}
\vspace{-0.2cm}

\subsubsection{Non-targeted attacks}\label{app:B.8.non-target}

Non-targeted attacks refer to the accuracy of classifying clean and trigger inputs based on a trained model. As the clean and poisoned datasets tend to have different class distributions, we consider non-targeted attacks and observe the model's accuracy drop to measure the attack effectiveness if the trigger is input. \cref{tab:nontarget} shows the purified model's accuracy in classifying clean and trigger inputs. More precisely, we find that even if the validation set does not contain the target class, the trigger inputs tend to be misclassified to certain classes (\eg, ``bird'', ``ship'', and ``airplane'' in the test set). We conduct five SSL algorithms trained on CIFAR10 that are considered in~\cref{tab:nontarget}. The purified model trained using FixMatch achieves $89.28$\% and $89.70$\% accuracy on clean and trigger inputs, respectively. However, the non-targeted BA of UDA algorithms drops more than the other four algorithms. 

% Five SSL models trained on CIFAR10 are considered below. We'll provide the full results in the revised version. 

\begin{table}[!ht]
\begin{center}
\caption{Evaluation against non-targeted attacks on CIFAR10.}
\label{tab:nontarget}
\resizebox{\linewidth}{!}{%
\begin{tabular}{cccccccccc}
\toprule
 & MixMatch && RemMixMatch && UDA && FixMatch && FlexMatch \\ \midrule
BA & 87.85\% && 87.22\% && 93.59\% && 89.28\% && 94.27\% \\ 
Non-target BA & 84.47\% && 83.73\% && 77.47\% && 89.70\% && 92.14\% \\ \bottomrule
\end{tabular}}
\end{center}
\end{table}
\vspace{-0.10in}

\subsubsection{Impact of different perturbation area sizes.}  We evaluate the impact of different perturbation area sizes, $\tau\times\tau$, in terms of BA and ASR on UPure in~\cref{fig4-b}, where $\tau \in \{4, 8, 16, 24 \}$. We find that a larger distortion region (\eg, $24\times24$) in the DCT spectrum sacrifices a little BA (\ie, a decrease of $2.33\%$ in BA) but preserves a better ASR (\ie, $0\%$). \cref{fig4-b} also indicates that a small perturbation region in the high-frequency component is not enough to remove the backdoors. To better compromise between BA and ASR, we choose an appropriate size (\ie, $\tau = 16$) in Sec.~4 for comparisons. 

\subsubsection{Impacts on poisoning rate of unlabeled data.} 
\cref{fig4-a} displays the performance of UPure under different poisoning rates of unlabeled data. Specifically, we vary the poisoning rates, \ie, $0.15\%$, $0.2\%$, $0.3\%$, $0.4\%$, and $0.5\%$. Regardless of the poisoning rates, UPure can suppress the occurrence of backdoors in terms of nearly zero ASRs and no significant drops in BA. This implies that UPure successfully breaks the association between the backdoor and target class with minimal changes to BA. Due to resource constraints, we perform these experiments only for a subset of combinations from Sec.~4. 

\begin{figure}[!htp]
\centering
    \begin{minipage}{0.335\linewidth}
        \centering
        \includegraphics[width=0.95\linewidth]{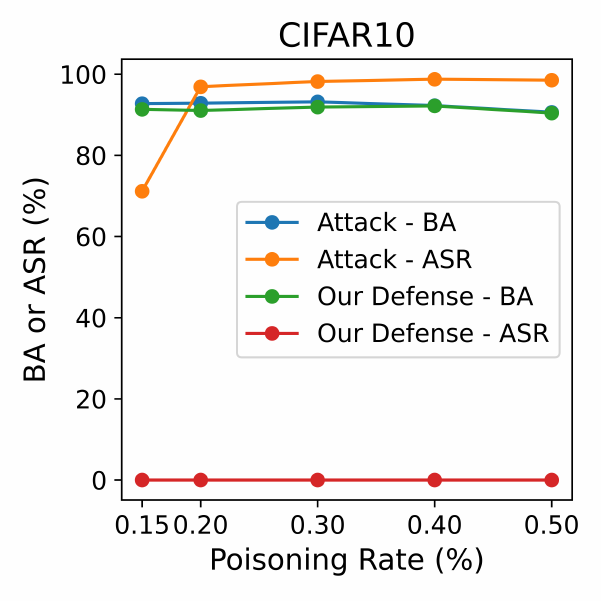}
        \caption{Poisoning rate vs. BA/ASR.}
        \label{fig4-a}
    \end{minipage}
    % \hspace{0.1ex}
    \begin{minipage}{0.64\linewidth}
        \centering
        \includegraphics[width=\linewidth]{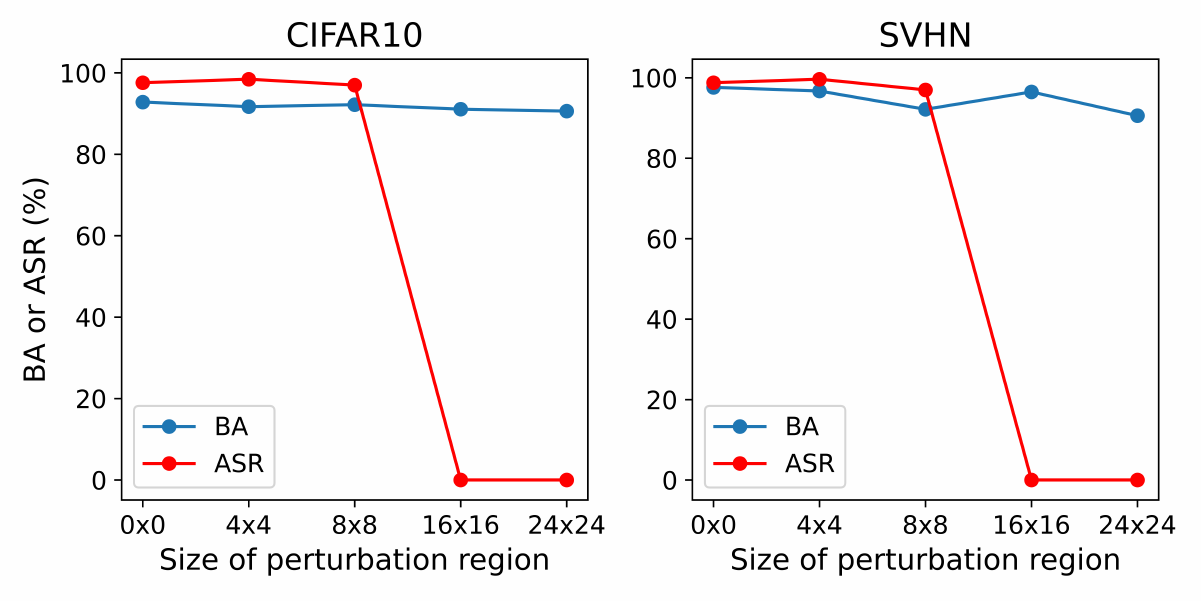}
        \caption{Different sizes of perturbation regions vs. BA/ASR.}
        \label{fig4-b}
    \end{minipage}
\end{figure}

\subsection{Comparison with pre-processing methods} 

We compare UPure with three filter-based data pre-processing methods (\ie, Gaussian Filter, Bilateral Filter~\cite{tomasi1998bilateral}, and Median Filter). We preprocess the unlabeled training data with these filters before feeding them into the model. From~\cref{tab:abl-smooth}, we observe that Gaussian Filter can be effective in terms of lowering ASR across CIFAR10 and SVHN. However, it also significantly degenerates the BA performance in CIFAR10. Bilateral Filter does not eliminate the backdoor effects in both datasets. Median Filter reduces BA the most among other methods. UPure is more effective at resisting backdoor attacks than other methods, showing only a minor decrease in BA and a significantly low ASR.

\setlength{\tabcolsep}{4pt}
\begin{table}[!ht]
\begin{center}
\caption{Results on filter-based data pre-processing methods.}
\label{tab:abl-smooth}
\scalebox{0.85}{
\begin{tabular}{llllllll}
\toprule
%\hline\noalign{\smallskip}
\multirow{2}{*}{Defense} & \multicolumn{3}{c}{CIFAR10} & &\multicolumn{3}{c}{SVHN} \\
\cmidrule{2-4} \cmidrule{6-8}
& \multicolumn{1}{c}{BA} & ASR & BA $\downarrow$ & &\multicolumn{1}{c}{BA} & ASR & BA $\downarrow$\\
\midrule
No defense &92.80&97.54& - & &97.61 & 98.76 & -\\
\midrule
Gaussian Filter &64.79&1.42&28.01&&96.40&\underline{2.37}&1.21\\
Bilateral Filter &\underline{88.16}&99.15&4.64&&\textbf{97.43}&99.84&0.18\\
%Non-local Means &76.66&0.00&16.14&&96.06&99.75&1.55\\
Median Filter &42.32&\underline{1.12}&50.48&&90.83&96.21&6.78\\
UPure &\textbf{91.05}&\textbf{0.00}&1.75&&\underline{96.49}&\textbf{0.00}&1.12\\
\bottomrule
\end{tabular}}
\end{center}
\vspace{-0.1in}
\end{table}

\section{Theory Details}\label{app:C}

\subsection{Rate-Distortion Theory}\label{app:C.1-rd}
Rate-distortion theory analyzes the fundamental trade-off between the rate used for representing samples from a data source $X\sim p_X$, and the expected distortion incurred in decoding those samples from their compressed representations. Formally, the relation between the input $X$ and output $\hat{X}$ of an encoder-decoder pair, is a (possibly stochastic) mapping defined by some conditional distribution $p_{\hat{X}|X}$. The expected distortion of the decoded signals is thus defined as
\begin{align}
    \mathbb{E}[\Delta(X,\hat{X})],
\end{align}
where the expectation is \wrt the joint distribution $p_{X,\hat{X}}=p_{\hat{X}|X}p_X$, and $\Delta:\mathcal{X} \times \hat{\mathcal{X}} \rightarrow \mathbb{R}^+$ is any full-reference distortion measure such that $\Delta(x,\hat{x})=0$ if and only if $x=\hat{x}$.

A key result in rate-distortion theory states that for an \iid source $X$, if the expected distortion is bounded by $D$, then the lowest achievable rate $R$ is characterized by the (information) rate-distortion function

\begin{equation}\label{eq:RD}
R(D) = \min_{p_{\hat{X}|X}} \, I(X,\hat{X}) \quad \textrm{s.t.} \quad \mathbb{E}[\Delta(X,\hat{X})] \le D,
\end{equation}

where $I$ denotes mutual information \cite{cover2012elements}. 
Closed-form expressions for the rate-distortion function $R(D)$ are known for only a few source distributions and under simple distortion measures (\eg, squared error or Hamming distance). However several general properties of this function are known, including that it is always monotonic, non-increasing, and convex. 

\subsection{The Rate-Distortion-Perception Trade-offs}\label{app:C.2-rdp-tf}

% \begin{definition}
% The (information) rate-distortion-perception function is defined as
% \begin{align}\label{eq:RDP}
% R(D,P) = &\min_{p_{\hat{X}|X}} \,\, I(X,\hat{X}) \nonumber \\ 
% &\,\, \text{\emph{s.t.}} \,\, \mathbb{E}[\Delta(X,\hat{X})] \le D,\,\, d(p_X,p_{\hat{X}}) \le P.
% \end{align}
% \end{definition}

% \begin{figure}
%     \centering
%     \includegraphics[width=0.5\linewidth]{figures/rate_distortion.png}
%     \caption{Rate Distortion Curves. Note that these curves are the lower bound. All points above the curve are legal points.}
%     \label{fig:rate_distortion_curves}
% \end{figure}

% 這邊解釋如何把目前方法導向
% To induce the Rate-Distortion-Perception trade-off into our paradigm, we first need to define the variables. Let $X$ be a single DCT coefficient and the encoder and decoder together are the decision function that decides whether to change $X$ or not. In this straightforward setting, we can directly apply the Bernoulli source case with Hamming distance as the distortion function and the total-variation (TV) divergence as the perception distance.
In this section, we introduce the RDP function and its solution for Gaussian sources. While there are various solutions under different source conditions, we focus on the Gaussian version due to its simplification of mathematical treatment, particularly under mean squared error (MSE) distortion. This allows us to conduct a theoretical analysis of the universal RDP representation.

\begin{figure}[!ht]
    \centering
    \begin{subfigure}[b]{0.48\linewidth}
        \centering
        \includegraphics[width=\linewidth]{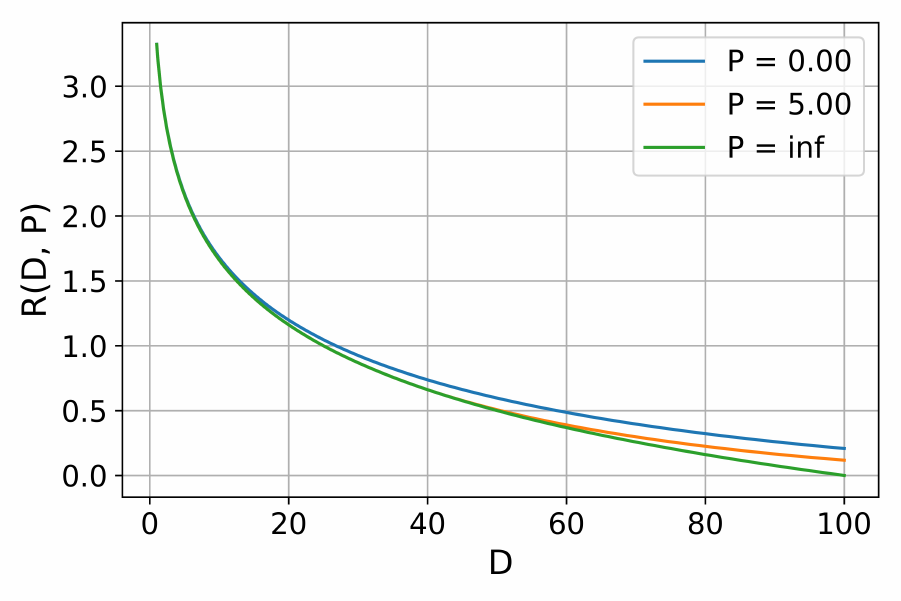}
        \caption{$R(D)$ under various $P$}
        \label{fig3-a}
    \end{subfigure}
    \begin{subfigure}[b]{0.48\linewidth}
        \centering
        \includegraphics[width=\linewidth]{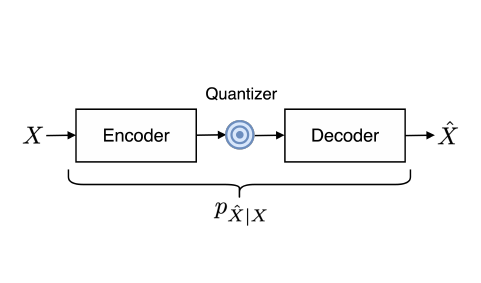}
        \caption{Lossy compression}
        \label{fig3-b}
    \end{subfigure}
    
    \caption{Rate-Distortion-Perception functions and lossy compression scheme. (a) The curve is computed from Eq.~(\ref{eq:grdp}) in Theorem~\ref{thm:gaussian_rdp}. Note that these curves represent the lower bound, with all points above them being considered feasible solutions.}
    \label{fig:rate_distortion_curves}
\end{figure}

% \vspace{-0.3in}
\begin{theorem}\cite{zhang2021universal}\label{thm:gaussian_rdp}
For a scalar Gaussian source $X \sim \mathcal{N}(\mu_X,\sigma_X ^2)$, the information rate-distortion-perception function under squared error distortion and squared Wasserstein-2 distance is attained by some $\hat{X}$ jointly Gaussian %\footnote{The term ``Jointly Gaussian'' implies every finite linear combination remains Gaussian.} 
with $X$ and is given by

\begin{align}\label{eq:grdp}
 % \hspace{-0.2cm}
  \resizebox{\linewidth}{!}{$
R(D, P)
=\begin{cases}
    \frac{1}{2} \log \frac{\sigma_{X}^{2} (\sigma_{X}-\sqrt{P})^{2}}{\sigma_{X}^{2} (\sigma_{X}-\sqrt{P})^{2}-(\frac{\sigma_{X}^{2}+(\sigma_{X}-\sqrt{P})^{2}-D}{2})^{2}} 
    % \hspace{-1ex}
 &\text{ if } \sqrt{P} \leq \sigma_{X}-\sqrt{|\sigma^2_X-D|}, \\[12pt]
    \max\{\frac{1}{2}\log\frac{\sigma^2_X}{D},0\} &\text{ if } \sqrt{P} > \sigma_{X}-\sqrt{|\sigma^2_X-D|}.
\end{cases}$}
\end{align}
\end{theorem}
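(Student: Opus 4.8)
The plan is to establish the closed form in two stages: first reduce the infinite-dimensional minimization over all conditional laws $p_{\hat{X}\mid X}$ to a minimization over jointly Gaussian pairs $(X,\hat{X})$, and then solve the resulting scalar problem explicitly. The leverage comes from the fact that, because $X$ is Gaussian, both the squared-error distortion and the squared Wasserstein-2 perception depend on the coupling only through a handful of moments. Once the Gaussian reduction is in place the feasible set is parametrized by the triple $(\mu_{\hat{X}},\sigma_{\hat{X}}^2,\rho)$, where $\rho$ is the correlation of $X$ and $\hat{X}$, and the objective becomes an elementary function of these three quantities.

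For the reduction I would argue that restricting to jointly Gaussian $(X,\hat{X})$ is without loss. Given any feasible pair, replace it by the jointly Gaussian $(X_G,\hat{X}_G)$ sharing its first and second moments. Three facts make the surrogate at least as good: (i) $X_G$ has the same law as $X$; (ii) the distortion $\mathbb{E}[(X-\hat{X})^2]=(\mu_X-\mu_{\hat{X}})^2+\sigma_X^2+\sigma_{\hat{X}}^2-2\rho\sigma_X\sigma_{\hat{X}}$ depends only on the matched moments, hence is unchanged; and (iii) by the maximum-entropy characterization the Gaussian joint law minimizes $I(X;\hat{X})$ among all couplings with a fixed covariance, so the rate does not increase. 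The one point needing care is the perception constraint, handled by the Gelbrich (Bures--Wasserstein) lower bound $W_2^2(p_X,p_{\hat{X}})\ge(\mu_X-\mu_{\hat{X}})^2+(\sigma_X-\sigma_{\hat{X}})^2$, which holds with equality when both marginals are Gaussian; thus the Gaussianized output $\hat{X}_G$ has no larger Wasserstein-2 distance to $p_X$, so feasibility is preserved and the minimum is attained inside the Gaussian family.

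Next I would solve the scalar problem. Using $I(X;\hat{X})=-\tfrac12\log(1-\rho^2)$ together with the moment forms of $D$ and of $W_2^2$, one checks that matching means, $\mu_{\hat{X}}=\mu_X$, is optimal since a mean mismatch only inflates both constraints, so one may set $\mu_X=0$. Minimizing the rate amounts to minimizing $\rho^2$, while the distortion constraint $\sigma_X^2+\sigma_{\hat{X}}^2-2\rho\sigma_X\sigma_{\hat{X}}\le D$ forces $\rho$ upward; hence at the optimum the distortion constraint is active and $\rho=(\sigma_X^2+\sigma_{\hat{X}}^2-D)/(2\sigma_X\sigma_{\hat{X}})$. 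Substituting turns the rate into an explicit one-variable function of $s:=\sigma_{\hat{X}}$ over the interval $[\sigma_X-\sqrt{P},\,\sigma_X+\sqrt{P}]$ imposed by perception. A monotonicity analysis shows its unconstrained minimizer is the classical backward-channel value $s=\sqrt{\sigma_X^2-D}$; when perception is tight this point lies below the feasible interval, the optimum is pushed to the boundary $s=\sigma_X-\sqrt{P}$, and one recovers the first branch $\tfrac12\log\frac{\sigma_X^2(\sigma_X-\sqrt{P})^2}{\sigma_X^2(\sigma_X-\sqrt{P})^2-(\frac{\sigma_X^2+(\sigma_X-\sqrt{P})^2-D}{2})^2}$. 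When $P$ is large enough that $\sqrt{\sigma_X^2-D}$ already lies in the interval, the perception constraint is slack and the rate collapses to the ordinary Gaussian value $\max\{\tfrac12\log(\sigma_X^2/D),0\}$. Evaluating $W_2^2$ at the backward-channel optimum shows the crossover occurs exactly at $\sqrt{P}=\sigma_X-\sqrt{|\sigma_X^2-D|}$, matching the stated threshold.

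I expect the Gaussian-reduction lemma to be the main obstacle, since both the mutual-information inequality and the Wasserstein lower bound must be invoked in the correct direction with their equality cases tracked; the $|\sigma_X^2-D|$ and the $\max\{\cdot,0\}$ truncation also require attending to the degenerate regime $D\ge\sigma_X^2$. By contrast, once the problem is scalar the remaining work is a routine but slightly delicate case split to decide whether the optimal $\sigma_{\hat{X}}$ sits on the lower boundary or in the interior, and to verify that the two branches agree continuously at the threshold.
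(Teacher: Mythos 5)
Your proposal is correct, but note that the paper itself does not prove this theorem: it is imported verbatim from \cite{zhang2021universal}, and the supplementary merely points readers to prior work for details. Your two-stage argument---Gaussianization of the coupling via the maximum-entropy bound on $h(X\mid\hat{X})$ with moments fixed, plus the Gelbrich lower bound on $W_2^2$ to preserve feasibility, followed by the scalar optimization where the distortion constraint is active, $\rho=(\sigma_X^2+\sigma_{\hat{X}}^2-D)/(2\sigma_X\sigma_{\hat{X}})$, and the boundary-versus-interior case split at $\sqrt{P}=\sigma_X-\sqrt{|\sigma_X^2-D|}$---is essentially the standard derivation used in that cited source, so there is nothing to flag.
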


As shown in~\cref{fig3-a}, we can see that under the same distortion, the smaller the $P$ is, the larger the rate $R$ is. A source signal $X\!\sim\! p_X$ is mapped into a coded sequence by an encoder and back into an estimated signal $\hat{X}$ by the decoder. Through this concept of RDP, we aim to satisfy three properties in our approach: ($i$) if the coded sequence has low rate, it implies that more repetitive trigger patterns are eliminated; ($ii$) the reconstruction $\hat{X}$ is similar to the source $X$ on average (low distortion), which indicates that even the clean data is reconstructed well; ($iii$) the distribution $p_{\hat{X}}$ is similar to $p_X$ so that decoded signals are perceived as genuine source signals (good perceptual quality), which means that the model can still learn the original distribution well (\eg, high classification accuracy).

In addition, we also recall the concept of lossy compression, including an encoder, a decoder, and a quantizer, as depicted in~\cref{fig3-b}. The input and output signals follow a predefined distribution. In short, the lossy compression scheme can be viewed as a conditional probability distribution that can be analyzed in a mathematical treatment. For further details, please refer to the previous works~\cite{shannon1959coding,blau2018perception,blau2019rethinking,zhang2021universal}.

\subsection{Analysis of Theorem 1.}\label{app:thm:2}
Since the common SSL algorithms in~\cref{app:A} adapt cutout operation as strong data augmentation on unlabeled training data, we devise Lemma 1 to explain the failure of a single trigger if any area $\alpha$ of the trigger intersects with the cutout region. We further present Lemma 2 to discuss how UPure can resist repetitive trigger patterns by perturbing high-frequency components. Their proofs are presented below.

%To handle the low-frequency trigger patterns, we mainly focus on the cutout operation and assume that if any area $A$ of the trigger intersects with the cutout region the attack fails.

\subsubsection{Proof of Lemma 1.}

% \iffalse
% \begin{lemma}\label{app:lem1}
%         \textit{
%         Consider an image of size $H \times W$ with a single trigger bounded by a rectangle $H_t \times W_t$ and the moving cutout~\cite{devries2017improved} region is $H_c \times W_c$, where $H > H_c \ge H_t$ and $W >W_c \ge W_t$. Let $h$ ($w$) be the distance between the lower left corner of the cutout region and the trigger along the horizontal (vertical) direction. We have the failure probability $p_f^L$ of a single trigger
%     \begin{align}
%     p_f^L \ge \frac{\sum_{w=0}^{\left\lfloor W_t - \frac{\alpha}{H_t} \right\rfloor}\left[\left\lfloor \Phi(w) \right\rfloor + 1\right]}{(H-H_c+1)(W-W_c+1)}
% \end{align}
%     where $\Phi(w) = H_t - \frac{\alpha}{(W_t - w)} = h$ and $1 \leq \alpha \leq H_t \times W_t$. }
% \end{lemma}\fi

%\begin{proof}
    Let $p_f^{single}$ denote the failure probability of a single trigger when a randomly positioned cutout region intersects it. It can be observed that $p_f^{single}$  reaches its minimum value when the trigger is placed at any corner of the image. This phenomenon can be attributed to the geometric constraints imposed by the image boundaries, which limit the spatial configurations available for the cutout region to intersect the triggers positioned at a corner.
    
%     \[
% \begin{aligned}
%     & \text{The equation } (W_t - w)(H_t - h) = \alpha \text{ can be expanded as follows:} \\
%     & h =  H_t - \frac{\alpha}{(W_t - w)}, w =  W_t - \frac{\alpha}{(H_t - h)}\\
%     & \text{Let } \Phi(w) = H_t - \frac{\alpha}{(W_t - w)} \\
%     & \Phi(w) = 0 \Rightarrow H_t = \frac{\alpha}{(W_t - w)} \Rightarrow w = W_t - \frac{\alpha}{H_t}\\
%     & \Phi(0) = Ht - \frac{\alpha}{W_t}. \\
% \end{aligned}
% \]
% and start at $(0,0)$ to move such area along $w$ and $d$ direction 

Without loss of generality, we assume the trigger pattern is at the bottom-left corner of the image, where the coordinate is $O=(0,0)$. Let the position of bottom-left corner of the cutout region be $(w, h)$, as shown in~\cref{fig:C-lem1}. Under the constraint of minimal coverage area $\alpha$ that makes the trigger invalid, we can obtain the equation
\begin{align*}
    (\mathsf{W}_t - w)(\mathsf{H}_t - h) = \alpha.
\end{align*}
%where $W_t$ and $H_t$ represent the width and height of the trigger, respectively.
% , and $h$ and $w$ denote the horizontal and vertical moving direction of a cutout region, 

\begin{figure}[!ht]
    \centering
    \begin{minipage}{0.44\linewidth}
        \centering
        \includegraphics[width=\linewidth]{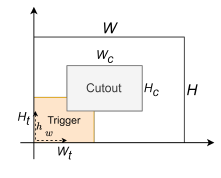}
        \caption{Example of cutout operation intersects with a trigger.}
        \label{fig:C-lem1}
    \end{minipage}
    \hspace{0.1ex}
    \begin{minipage}{0.43\linewidth}
        \centering
        \includegraphics[width=0.82\linewidth]{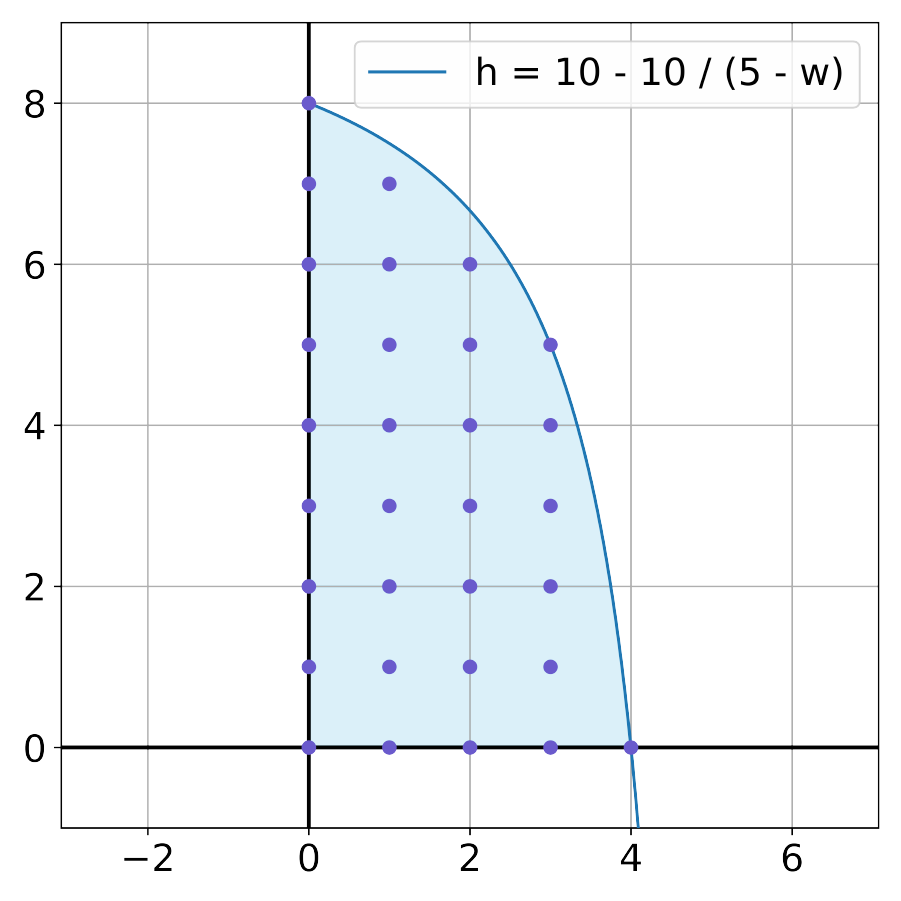}
        \caption{Example of lattice points under a hyperbolic.}
        \label{fig:C-hypo}
    \end{minipage}
\end{figure}

With $\alpha$ being a constant, we can derive $h$ and $w$ as follows:

1. Solving for $h$, we obtain:
\[
h = \mathsf{H}_t - \frac{\alpha}{(\mathsf{W}_t - w)}.
\]

2. Solving for $w$, we find:
\[
w = \mathsf{W}_t - \frac{\alpha}{(\mathsf{H}_t - h)}.
\]

Let us define a function $\Phi(w)$ such that:
\[
\Phi(w) = \mathsf{H}_t - \frac{\alpha}{(\mathsf{W}_t - w)}.
\]

Analyzing the roots of $\Phi(w)$, specifically when $\Phi(w) = 0$, yields:
\[
\mathsf{H}_t = \frac{\alpha}{(\mathsf{W}_t - w)} \Rightarrow w = \mathsf{W}_t - \frac{\alpha}{\mathsf{H}_t}.
\]

Additionally, evaluating $\Phi(w)$ at $w = 0$ gives:
\[
\Phi(0) = \mathsf{H}_t - \frac{\alpha}{\mathsf{W}_t}.
\]
% Next, we like to compute the integer solutions under the curve $\Phi(w)$ which is :
% \begin{align*}
%     \text{Area} = \sum_{w=0}^{\floor{W_t - \frac{\alpha}{H_t}}}\left[\floor{\Phi(w)} + 1\right].
% \end{align*}
Given the function \(\Phi(w) = \mathsf{H}_t - \frac{\alpha}{(\mathsf{W}_t - w)}\) (\eg, the blue curve in~\cref{fig:C-hypo}), we aim to compute the number of integer solutions under the curve defined by \(\Phi(w)\), which means every $(w,h)$ makes overlapping region $\textsf{Area}_\textsf{overlap} \geq \alpha$. The number of feasible configurations of $(w, h)$, in terms of the sum of integer solutions, can be calculated as follows:
\begin{align}\label{eq:int_sol}
    \textsf{\#config} = \sum_{w=0}^{\left\lfloor \mathsf{W}_t - \frac{\alpha}{\mathsf{H}_t} \right\rfloor}\left[\left\lfloor \Phi(w) \right\rfloor + 1\right].
\end{align}

Finally, the probability that the intersection $\textsf{Area}_\textsf{overlap}$ of the cutout region and trigger region is greater than $\alpha$ is:
\[
\resizebox{\linewidth}{!}{
$p_f^{single} \geq \text{Pr} \left( \textsf{Area}_{\textsf{overlap}} \geq \alpha \right) = \frac{\sum_{w=0}^{\left\lfloor W_t - \frac{\alpha}{H_t} \right\rfloor}\left[\left\lfloor \Phi(w) \right\rfloor + 1\right]}{(\mathsf{H}-\mathsf{H}_c+1)(\mathsf{W}-\mathsf{W}_c+1)}
$},
\]
where the denominator $(\mathsf{H}-\mathsf{H}_c+1) (\mathsf{W}-\mathsf{W}_c+1)$ represents the total number of distinct configurations in which the cutout region can manifest. 
%\end{proof}

\noindent \textbf{Remark.} In~\cref{fig:C-lem1}, we illustrate the movement of the cutout region when the trigger region is positioned at the bottom-left corner. In addition, in~\cref{fig:C-hypo}, we provide an example of computing ~\cref{eq:int_sol}, where each point in the area represents a feasible integer solution of $(w, h)$, the bottom-left corner of cutout region. Similarly, as discussed in Sec. 5 in our paper, we visualize the results of Lemma 1. 

Given that the failure probability of a single trigger pattern is bounded in Lemma 1, we can extend our analysis to repetitive patterns and explore methods to bound their failure probability in Lemma 2.

\subsubsection{Proof of Lemma 2.}\iffalse
\begin{lemma}\label{app:lem2}
    \textit{
     Consider an $N$-pixel image, which has $N$ coefficients in the DCT-based frequency domain. The $k$-th coefficient is changed with probability $q_k$, where $M+1\leq k\leq N-1$ with $M$ as a threshold and $k$ follows the zig-zag order. When $\beta$ coefficients are changed, the failure probability $p_f^{repet}$ of repetitive triggers is
    \begin{align}
        p_f^{repet} \ge \sum_{\mathsf{L}=\beta}^{\mathsf{N}-\mathsf{M}-1} \binom{\mathsf{N}-\mathsf{M}-1}{J} \prod_{k \in \mathsf{S}}q_k\prod_{k \in \mathsf{S}^C}(1 - q_k),
    \end{align}
    where $\mathcal{A}$ is a set of frequency indexes in which the corresponding coefficients are changed, and $|\mathsf{S}| = \mathsf{L}$ and $|\mathsf{S}^C| = (\mathsf{N}-\mathsf{M}-1)-J$ are index sets chosen from the universal index set $\mathsf{U}$.}
    %\footnote{The universal index set $\mathcal{U} = \{M+1, M+2, \cdots, N-1\} = \mathcal{A} \cup \mathcal{A}^C$, indicating that the rate to be preserved is $M$ and the other coefficients can be discarded.}
\end{lemma}\fi
%\begin{proof}
    As mentioned in~\cite{shejwalkar2023perils}, cutout~\cite{devries2017improved} operations and other strong data augmentations also destroy low frequency in SSL training. Here, we only consider the high-frequency components from $\mathsf{M+1}$ to $\mathsf{N}-1$, the total number of changeable coefficients is $\mathsf{N}-\mathsf{M}-1$. The first combination number should be $\binom{\mathsf{N}-\mathsf{M}-1}{\beta}$, and it's important to account for numbers greater than $\beta$ as well. The proof is concluded.
% \fi

\end{document}